\pgfplotsset{compat=newest, ticks=none}
\newcommand{\myvec}[1]{\vec{#1}}
\newcommand{\hy}{\myvec{y}}
\newcommand{\dx}{\myvec{x}\,'}
\newcommand{\dy}{\myvec{y}\,'}
\newcommand{\hxnovec}{x}
\newcommand{\hynovec}{y}
\newcommand{\dxnovec}{x^\prime}
\newcommand{\dynovec}{y^\prime}
\newcommand{\alphaf}{\alpha_{\mathrm{f}}}
\newcommand{\alphab}{\alpha_{\mathrm{b}}}
\newcommand{\cram}[1]{\ifarxiv#1\fi}
\newcommand{\cramalt}[2]{\ifarxiv#1\else#2\fi}
\newtheorem{claim}{Claim}
\newtheorem{assumption}{Assumption}
\newtheorem{property}{Property}
\DeclarePairedDelimiter{\ceil}{\lceil}{\rceil}
\DeclarePairedDelimiter\floor{\lfloor}{\rfloor}
\DeclareMathOperator{\E}{\mathbb{E}}
\DeclarePairedDelimiterX{\inp}[2]{\langle}{\rangle}{#1, #2}
\newcommand{\figsize}{_small}
\newif\ifappendix 
\newif\ifarxiv
\title{Online Normalization for Training Neural Networks}
\author{%
  Vitaliy Chiley\thanks{Equal contribution} \qquad 
  Ilya Sharapov\footnotemark[1] \qquad
  Atli Kosson    \qquad 
    Urs Koster \\
  \\
  \textbf{
  Ryan Reece   \qquad 
  Sof\'ia Samaniego de la Fuente \qquad 
  Vishal Subbiah  \qquad 
  Michael James\footnotemark[1]\;\;\thanks{Corresponding author: \texttt{michael@cerebras.net}} 
  }
  \\
  \\
  Cerebras Systems\\
  175 S. San Antonio Road\\ 
  Los Altos, California 94022\\
}
\begin{document}

\maketitle

\begin{abstract}
Online Normalization is a new technique for normalizing the hidden activations
of a neural network. Like Batch Normalization, it normalizes the sample
dimension. While Online Normalization does not use batches, it is as
accurate as Batch Normalization. We resolve a theoretical limitation of Batch
Normalization by introducing an unbiased technique for computing the gradient
of normalized activations.
Online Normalization works with automatic differentiation by adding
statistical normalization as a primitive.
This technique can be used in cases not covered by some other
normalizers, such as recurrent networks, fully connected networks,
and networks with activation memory requirements prohibitive for batching.
We show its
applications to image classification, image segmentation, and language
modeling. We present formal proofs and experimental results on ImageNet, CIFAR,
and PTB datasets.
\end{abstract}

\section{Introduction} 
\label{sec:intro}


Traditionally, neural networks are {\em functions} that map
inputs deterministically to outputs. Normalization makes this
non-deterministic because each sample is affected not only by the
network weights but also by the statistical distribution of samples.
Therefore, normalization re-defines neural networks to be
{\em statistical operators}.
Normalized networks treat each neuron's output as a random
variable that ultimately depends on the network's parameters and
input distribution. No matter how it is stimulated, a normalized neuron
produces an output distribution with zero mean and unit variance.

While normalization has enjoyed widespread success, current normalization
methods have theoretical and practical limitations. These limitations stem
from an inability to compute the gradient of the ideal normalization operator.

Batch methods are commonly used to approximate ideal normalization.
These methods use the distribution of the current minibatch as a proxy
for the distribution of the entire dataset. They produce biased estimates
of the gradient that violate a fundamental tenet of stochastic gradient
descent (SGD): It is not possible
to recover the true gradient from any number of small batch evaluations.
This bias becomes more pronounced as batch size is reduced.

Increasing the minibatch size provides more accurate approximations of
normalization and its gradient at the cost of increased memory consumption.
This is especially problematic for image processing and volumetric networks.
Here neural activations outnumber network parameters, and even modest batch
sizes reduce the trainable network size by an order of magnitude.

Online Normalization is a new algorithm that resolves these limitations
while matching or exceeding the performance of current methods.
It computes unbiased activations and unbiased
gradients without any use of batching. Online Normalization
differentiates through the normalization operator in a way that has
theoretical justification. We show the technique
working at scale with the ImageNet \cite{ILSVRC15} ResNet-50
\cite{DBLP:conf/cvpr/HeZRS16} classification benchmark, as well as
with smaller networks for image classification,
image segmentation, and recurrent language modeling.

\setcounter{footnote}{0}

Instead of using batches, Online Normalization uses running estimates of
activation statistics in the forward pass with a corrective
guard to prevent exponential behavior.
The backward pass implements a control process to ensure that back-propagated
gradients stay within a bounded distance of true
gradients. 
A geometrical analysis of normalization reveals
necessary and sufficient conditions that characterize
the gradient of the normalization operator.  
We further analyze the effect of approximation errors in the forward and backward
passes on network dynamics.
Based on our findings we present the Online Normalization technique and 
experiments that compare it with other normalization methods.
Formal proofs and all details necessary
to reproduce results are in the 
\ifappendix
appendix.
\else
appendix\footnote{The appendix is undergoing review and will 
be included in the next revision of the paper.}\!.
\fi
Additionally we provide reference code
in PyTorch, TensorFlow, and C~\cite{online_norm_github}.

\section{Related work}
\label{sec:related_work}

Ioffe and Szegedy introduced normalization of hidden activations
\cite{DBLP:journals/corr/IoffeS15}, defining it as a transformation that uses
full dataset statistics to eliminate {\em internal covariate shift}. 
They observed that the inability to differentiate through a running estimator
of forward statistics produces a gradient that leads to divergence
\cite{DBLP:journals/corr/Ioffe17}.
They resolved this with \cram{the }Batch Normalization\cram{ method}
\cite{DBLP:journals/corr/IoffeS15}. 
During training, each minibatch is
used as a statistical proxy for the entire dataset.
This allows use of gradient descent without a\cramalt{ running}{n} estimator process.
However, training still maintains running estimates for \cram{use during }validation and
inference. 

The success of Batch Normalization has inspired a number of
related methods that address its limitations. They can be classified as functional 
or heuristic methods. 

Functional methods replace the normalization operator with a
normalization function. The
function is chosen to share certain properties of the normalization operator.
Layer Normalization \cite{DBLP:journals/corr/BaKH16} normalizes across
features instead of across samples.  
Group Normalization \cite{DBLP:journals/corr/abs-1803-08494} generalizes
this by partitioning features into groups. 
Weight Normalization \cite{DBLP:journals/corr/SalimansK16} and Normalization 
Propagation \cite{DBLP:conf/icml/ArpitZKG16} apply normalization to
network weights instead of network activations. 

The advantage of functional normalizers is that they fit within the SGD framework, and work in
recurrent networks and large networks. However, when compared directly to batch
normalization they generally perform worse
\cite{DBLP:journals/corr/abs-1803-08494}.

Heuristic methods use
measurements from previous network iterations to augment the
current forward and backward passes. These methods do not differentiate
through\cram{ the}
normalization\cram{ operator}. Instead, they combine terms from previous
batch\cram{-based} approximations.
An advantage\cram{ of heuristic normalizers} is that they use  
more data to generate better estimates of forward statistics; however,
they lack correctness and stability guarantees. 

\cramalt{
Batch Renormalization~\cite{DBLP:journals/corr/Ioffe17} is one example of a
heuristic method.
While it uses an online process to estimate dataset statistics, these estimates
are based on batches and are only allowed to be within a fixed interval of the
current batch’s statistics.
Batch Renormalization does not differentiate through its statistical estimation
process, and like Instance Normalization~\cite{ulyanov2016instance}, it cannot
be used with fully connected layers at a batch size of one.

Streaming Normalization~\cite{DBLP:journals/corr/LiaoKP16} is also a heuristic method.
It performs one weight update for every
several minibatches. Instead of differentiating through the normalization
operator, it averages point gradients at long and short time scales. It
applies a different mixture in a saw-tooth pattern to
each minibatch depending on its timing relative to the latest weight update.

}{
Two examples of heuristic methods are Batch
Renormalization~\cite{DBLP:journals/corr/Ioffe17}
and Streaming Normalization~\cite{DBLP:journals/corr/LiaoKP16}. 
Batch Renormalization uses running statistics only when they
are within a fixed interval around the current batch's statistics.
Streaming Normalization performs one weight update for every
several minibatches. Instead of differentiating through the normalization
operator, it averages point gradients at long and short time scales. It
applies a different mixture in a saw-tooth pattern to
each minibatch depending on its timing relative to the latest weight update.
}

In recurrent networks, circular dependencies between sample statistics and activations pose a challenge to normalization \cite{DBLP:journals/corr/CooijmansBLC16, 7472159, DBLP:journals/corr/AmodeiABCCCCCCD15}. 
Recurrent Batch Normalization~\cite{DBLP:journals/corr/CooijmansBLC16} 
offers the approach of maintaining distinct statistics for each time step. 
At inference this results in a different linear operation being applied at 
each time step, breaking the formalism of recurrent networks. 
Functional normalizers avoid circular dependencies and have been shown 
to perform
better~\cite{DBLP:journals/corr/BaKH16}. 

\section{Principles of normalization} 
\label{sec:principles}

\label{subsec:objectives}

\newcommand{\R}{\mathbb{R}}
\newcommand{\PS}[1]{\vec{#1}^{\perp}}
\newcommand{\SN}{{S}^{N\!-\!1}}
\newcommand{\M}{{S}^{N\!-\!2}}
\newcommand{\fig}[1]{Figure~\ref{fig:#1}}
\newcommand{\xvec}{\vec{x}}
\newcommand{\yvec}{\vec{y}}
\newcommand{\dyvec}{\mathrm{d}\yvec}
\newcommand{\proj}[2]{\mathrm{P}_{\!#1\!}\!\left(#2\right)}
\newcommand{\projb}[2]{\mathrm{P}_{\!#1\!}\left(#2\right)}
\newcommand{\projop}[1]{\mathrm{P}_{\!#1}}
\newcommand{\I}{\mathrm{I}}
\newcommand{\paren}[1]{\left(#1\right)}

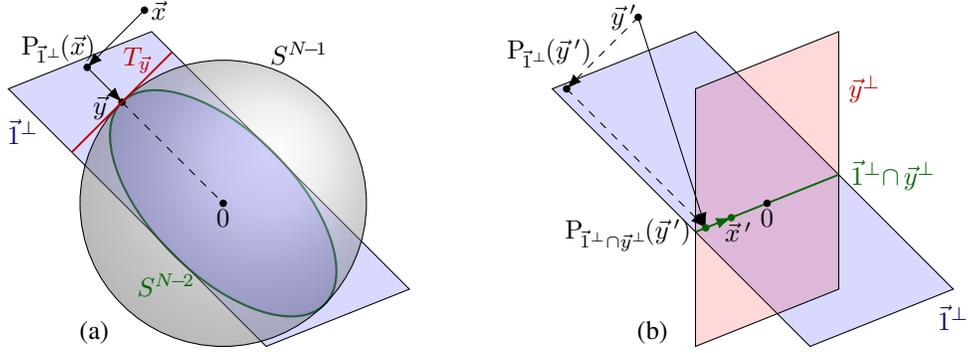
\begin{figure}[t]
\centering
\begin{tikzpicture}
  \begin{axis}[ xmin=-16, xmax=20, ymin=-14, ymax=14, axis equal, hide axis]
  \draw[rotate around={-45:(0,0)},black!60!green,thick,fill=blue, fill opacity=0.15](0,0) ellipse (10 and 5);
  \draw[fill=blue, fill opacity=0.15] (-15,8) -- (-5,12) -- (13,-6) -- (3,-10) -- cycle;
  \shade[ball color = gray!40, opacity = 0.4, shading angle=-90] (0,0) circle [radius=10];
  \draw (axis cs: 0, 0) circle [radius=10] ;
  \node at (-7.071,7.071)[circle,fill,inner sep=1.0pt]{};
  \node at (-9.5,9.5)[circle,fill,inner sep=1.0pt]{};
  \node[] at (-11.5,10.8) {$\proj{\PS1}{\xvec}$};
  \node at (-5.5,13.5)[circle,fill,inner sep=1.0pt]{};
  \node[] at (-4.5,13.5) {$\vec{x}$};
  \draw [-{Latex[length=2mm]}] (axis cs: -5.5,13.5) -> (axis cs: -9.5,9.5); 
  \draw [-{Latex[length=2mm]}] (axis cs: -9.5,9.5) -> (axis cs: -7.071,7.071); 
  \draw [dashed] (axis cs: -7.071,7.071) -- (axis cs: 0,0); 
  \node[] at (5.3,10.5) {$\SN$};
  \node[text=black!60!blue] at (-14,5) {$\PS1$};
  \node[] at (-8.6,6.7) {$\yvec$};
  \node at (0,0)[circle,fill,inner sep=1.0pt]{};
  \node[] at (0,-1) {$0$}; 
  \draw [line width=0.25mm,black!25!red] (axis cs: -7.071-3.5,+7.071-3.5) -- (axis cs: -7.071+3.5,+7.071+3.5); 
  \node[text=black!60!green] at (-4.0,-5.7) {$\M$};
  \node[text=black!25!red] at (-6,10) {$T_{\vec{y}}$};
  \node[] at (-9,-9) {(a)};
  \end{axis}
\end{tikzpicture}
\hspace{1cm}
\begin{tikzpicture}
    \begin{axis}[ xmin=-20, xmax=16, ymin=-14, ymax=14, axis equal, hide axis]
        \draw[fill=blue, fill opacity=0.15] (-15,8) -- (-5,12) -- (13,-6) -- (3,-10) -- cycle;
        \draw[fill=red, fill opacity=0.15] (-5,8) -- (5,12) -- (5,-6) -- (-5,-10) -- cycle;
        \node at (-9,13)[circle,fill,inner sep=1.0pt]{};  
        \draw [dashed, -{Latex[length=2mm]}] (axis cs: -9,13) -> (axis cs: -14,8); 
        \node at (-14,8)[circle,fill,inner sep=1.0pt]{};  
        \draw [dashed, -{Latex[length=2mm]}] (axis cs: -14,8) -> (axis cs: -14+68/7,8-68/7); 
        \draw [-{Latex[length=2mm]}] (axis cs: -9,13) -> (axis cs: -14+68/7,8-68/7); 
        \node at (-14+68/7,8-68/7)[circle,fill,inner sep=1.0pt,black!60!green]{};  
        \draw [-{Latex[length=2mm]},black!60!green] (axis cs: -14+68/7,8-68/7) -> (axis cs: -2.5,-1); 
        \node at (-2.5,-1)[circle,fill,inner sep=1.0pt,black!60!green]{};  
        \node[] at (-10,13) {$\dy$};
        \node[] at (-15.2,10.5) {$\proj{\PS1}{\dy}$};
        \node[] at (-9.7,-2.2) {$\proj{\PS1\cap\PS{y}}{\dy}$};
        \node[] at (-2,-2) {$\dx$};
        \draw [line width=0.25mm,black!60!green] (axis cs: -5,-2) -- (axis cs: 5,2); 
        \node[text=black!60!blue] at (13,-8) {$\PS1$};
        \node[text=black!25!red] at (6.8,8) {$\PS{y}$};
        \node[text=black!60!green] at (8.8,2) {$\PS1{\cap\,}\PS{y}$};
        \node at (0,0)[circle,fill,inner sep=1.0pt]{};
        \node[] at (0,-1) {$0$}; 
        \node[] at (-8,-9) {(b)};
    \end{axis}
\end{tikzpicture}
\caption{Geometry of normalization.}
\label{fig:geometric}
\end{figure}

Normalization is an affine transformation $f_\mathbb{X}$ that maps a scalar random variable $\hxnovec$ to an output 
$\hynovec$ with zero mean and unit variance.  
It maps every sample in a way that depends on the distribution $\mathbb{X}$,
\begin{equation}
\begin{aligned}
\label{eq:norm0}
f_\mathbb{X}\left[x\right] \equiv \frac{x - \mu\!\left[x\right]}{\sigma\left[x\right]} \quad \quad \quad x \sim \mathbb{X} \;,
\end{aligned}
\end{equation} 
resulting in normalized output $\hynovec$ satisfying 
\begin{equation}
\begin{aligned}
\label{eq:exact}
\mu\!\left[ \hynovec \right] = 0 \quad {\rm and} \quad
\mu\!\left[ \hynovec^2 \right] = 1 \;. 
\end{aligned}
\end{equation} 

When we apply normalization to network activations, the input distribution $\mathbb{X}$ is itself functionally 
dependent on the state of the network, in particular on the weights of all prior layers.  This poses a 
challenge for accurate computation of normalization because at no point in time
can we observe the entire distribution corresponding to the current values of the weights. 

Backpropagation uses the chain rule to compute the derivative 
of the loss function $L$ with respect to hidden activations.  We express this using the convention $(\cdot)' = \nicefrac{\partial L}{\partial (\cdot)}$  as 
\begin{equation}
\begin{aligned}
\label{eq:derivative}
\dxnovec = \frac{\partial f_\mathbb{X}[x]}{\partial \hxnovec}\left[ \dynovec \right]  \; . 
\end{aligned}
\end{equation} 

It is not obvious how to handle the derivative in the preceding equation, which is itself a statistical operator.  
The usual approaches do not work: Automatic differentiation cannot be applied to expectations. 
Exact computation over the entire dataset is prohibitive.  
Ignoring the derivative causes a feedback loop 
between gradient descent and the estimator process, leading to instability
 \cite{DBLP:journals/corr/IoffeS15}.  

Batch Normalization avoids these challenges by freezing the network while it
measures the statistics of a batch.
Increasing batch size improves accuracy of the gradients but also  
increases memory requirements and potentially impedes learning.
We started our study with the question: Is freezing the network the only way to
resolve interference between an estimator process and gradient descent?
It is not.
In the following sections we will show how to achieve the asymptotic accuracy
of large batch normalization while inspecting only one sample at a time.

\subsection{Properties of normalized activations and gradients} 
\label{subsec:gradient_properties}

Differential geometry provides key insights on normalization.
Let $\xvec \in \R^N$ be a finite-dimensional vector whose components approximate the normalizer's input
distribution. In the geometric setting, normalization is a \textit{function}
defined on $\R^N$. Its output $\yvec$ satisfies both conditions of
\eqref{eq:exact}.  The zero mean condition is satisfied on the subspace $\PS1$
orthogonal to the ones vector, whereas the unit variance condition is satisfied
on the sphere $\SN$ with radius $\sqrt{N}$~(\fig{geometric}a).
Therefore $\vec{y}$ lies on the manifold $\M=\PS1\cap\SN$.

Clearly, mapping $\R^N$ to a sphere is nonlinear.
The forward pass \eqref{eq:norm0} does this in two steps:
It subtracts the same value from all components of $\vec{x}$, which
is orthogonal projection $\projop{\PS1}$; then it rescales the result to $\SN$.
In contrast, the backward pass \eqref{eq:derivative} is linear because the
chain rule produces a product of Jacobians. The Jacobian
$\mathrm{J}=\left[\nicefrac{\partial{y_j}}{\partial{x_i}}\right]$
must suppress gradient components
that would move $\yvec$ off the manifold's tangent space.
$\M$ is a sphere embedded in a subspace,
so its tangent space $T_{\vec{y}}$ at $\yvec$ is orthogonal 
to both the sphere's radius $\vec{y}$ and the subspace's complement $\vec1$.
\begin{equation}
\begin{aligned}
\label{eq:orthogonal}
\dx = \mathrm{J}\dy \;\implies\; \projb{\vec1}{\dx} = \projb{\vec{y}}{\dx} = 0 \;.
\end{aligned}
\end{equation}
Because \eqref{eq:norm0} is the composition of two steps,
$\mathrm{J}$ is a product of two factors (\fig{geometric}b).
The unbiasing step $\projop{\PS1}$ is linear
and therefore is also its own Jacobian.
The scaling step is isotropic in $\PS{y}$
and therefore its Jacobian acts equally to all components in $\PS{y}$
scaling them by $\sigma$. The remaining $\yvec$ component must be
suppressed \eqref{eq:orthogonal}, resulting in:
\begin{equation}
\begin{aligned}
\label{eq:grad0}
\mathrm{J} = \frac{1}{\sigma}\, \projop{\PS1} \projop{\PS{y}}
   \; \implies \;
 \dx = \frac{1}{\sigma} \paren{\I-\projop{\vec1}} \paren{\I-\projop{\yvec}} \dy \; .
\end{aligned}
\end{equation}
This is the exact expression for backpropagation through the normalization
operator.
It is also possible to reach the same conclusion
\ifappendix
algebraically \cite{DBLP:journals/corr/Ioffe17} (Appendix \ref{apdx:gradient}).
\else
algebraically \cite{DBLP:journals/corr/Ioffe17} (see appendix). 
\fi

The input $\xvec$ is a continuous function of the neural network's weights and
dataset distribution. During training, the incremental weight updates
cause $\xvec$ to drift. Meanwhile, normalization
is only presented with a single scalar component of $\xvec$ while the other
components remain unknown.
Online Normalization handles this with an online control process that
examines a single sample per step while ensuring \eqref{eq:grad0}
is always approximately satisfied throughout training.

\subsection{Bias in gradient estimates} 
\label{subsec:bias}

Although normalization applies an affine transformation, it has a nonlinear dependence on the input 
distribution $\mathbb{X}$. Therefore, sampling the gradient of a normalized
network with mini-batches results in biased estimates. This effect becomes
more pronounced for smaller mini-batch sizes.
Consider the extreme case of normalizing a fully connected layer with batch
size two (Figure~\ref{fig:2_elem_norm}). Each pair of samples is transformed to
\cramalt{either $(-1,+1)$ or $(+1,-1)$}{$(\pm1,\mp1)$},
resulting in a piecewise constant surface.  
Since the output is discrete, the corresponding
gradient is zero almost everywhere. Of course, the true gradient is nonzero
almost everywhere and\cram{ therefore} cannot be recovered from any number
of batch-two evaluations. 
 
The same effect can be seen in more realistic cases.  Figure~\ref{fig:bn_bias} shows gradient bias 
as a function of batch size measured for a convolutional network with the 
CIFAR-10 dataset \cite{CIFAR}.  Ground truth for this plot used all 50,000 images in the dataset with weights 
randomly initialized and fixed. Even in this simple scenario, moderate batch sizes exhibit bias exceeding an angle of 10 degrees.

\begin{figure}
  \centering
  \begin{minipage}[b]{0.54\textwidth}
    \centering
    \includegraphics[width=\linewidth]{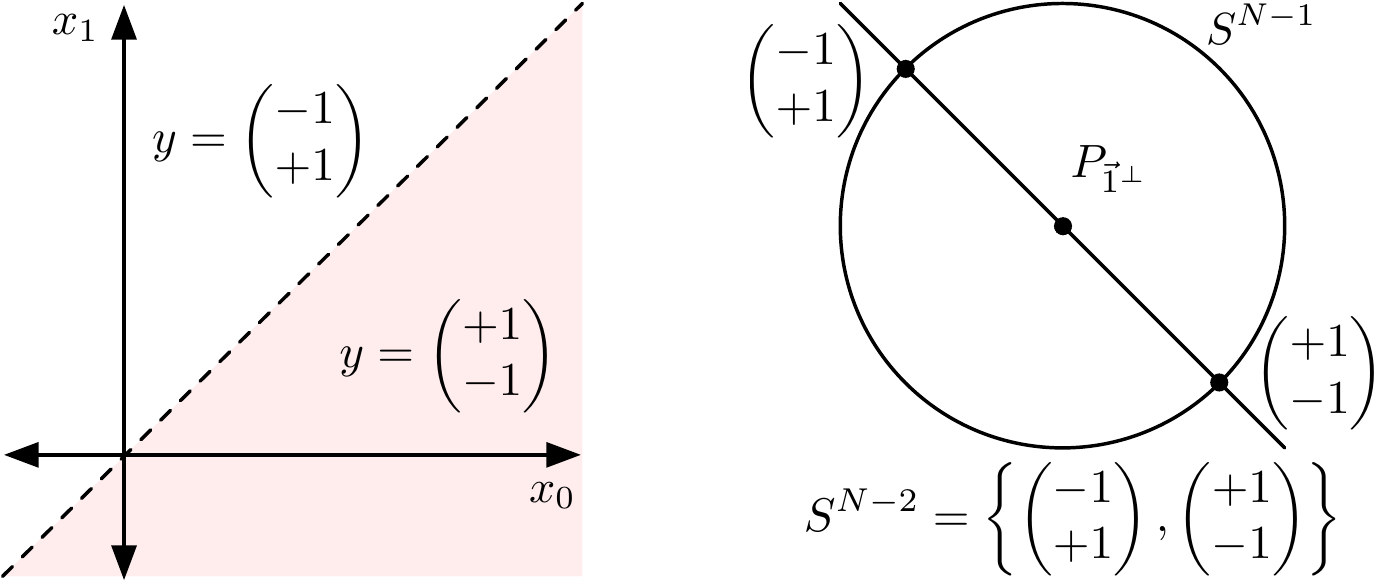}
    \caption{Two element normalization (N=2).}
    \label{fig:2_elem_norm}
  \end{minipage}
  \hfill
  \begin{minipage}[b]{0.45\textwidth}
    \centering
    \includegraphics[width=\linewidth,valign=t]{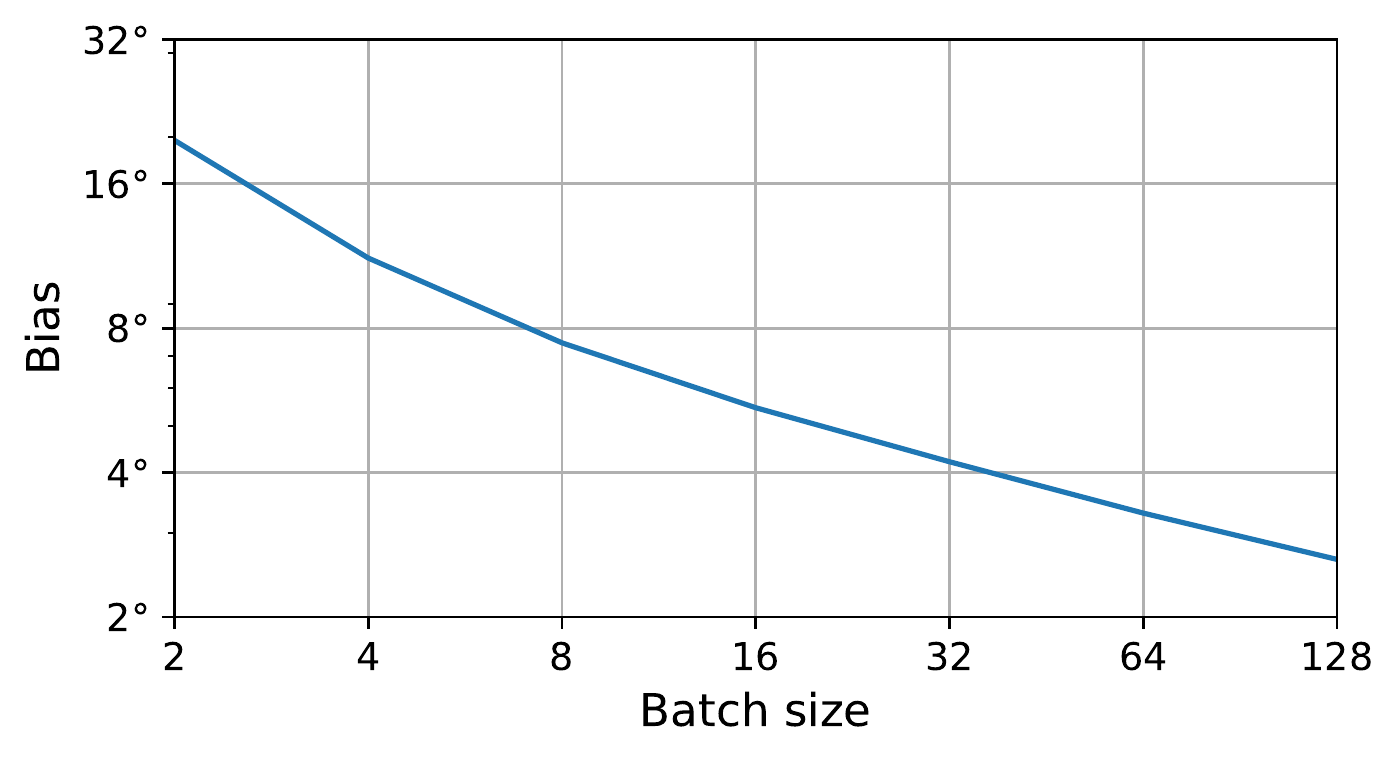}
    \caption{Gradient bias (BN).}
    \label{fig:bn_bias}
  \end{minipage}
\end{figure}

\subsection{Exploding and vanishing activations} 
\label{subsec:activations}

All normalizers are presented with the task of calculating specific values
of the affine coefficients $\mu\!\left[x\right]$ and
$\sigma\!\left[x\right]$ for the forward pass \eqref{eq:norm0}.
Exact computation of these coefficients  
is impossible without processing the entire dataset.  Therefore, SGD-based
optimizers must admit errors in normalization statistics.
These errors are problematic for networks that have unbounded activation functions, such as ReLU. 
It is possible for the errors to amplify through the depth of the network causing exponential growth of activation magnitudes.

Figure~\ref{fig:layer_correction} shows exponential behavior for a 100-layer fully connected network with a synthetic dataset.  
In each layer we compute 
exact affine coefficients using the entire dataset.  We randomly perturb the coefficients before applying inference to 
assess the sensitivity to errors.  Exponential behavior is easy to observe even with mild noise.  This effect is 
particularly pronounced when variances $\sigma^2$ are systematically underestimated, in which case each layer 
amplifies the signal in expectation. 

Batch Normalization does not exhibit exponential behavior.  Although its estimates contain error, 
exact normalization of a batch of inputs imposes (\ref{eq:exact}) as 
strict constraints on normalized output.  
For each layer, the largest possible output component is bounded by the square root of the batch size.  
Exponential behavior is precluded
because this bound does not 
depend on the depth of the network.  This property is also enjoyed by 
Layer Normalization and Group Normalization.

Any successful online procedure will also need a mechanism to avoid exponential growth  of activations.  
With a bounded activation function, such as tanh, this is achieved automatically. 
{\em Layer scaling} (Figure~\ref{fig:layer_correction}) 
that enforces the 
second equality of (\ref{eq:exact}) across 
all features in a layer is another possible mechanism that prevents both growth and decay of activations.

\begin{figure}
  \centering
  \begin{minipage}[b]{0.49\textwidth}
    \includegraphics[width=\linewidth,valign=t]{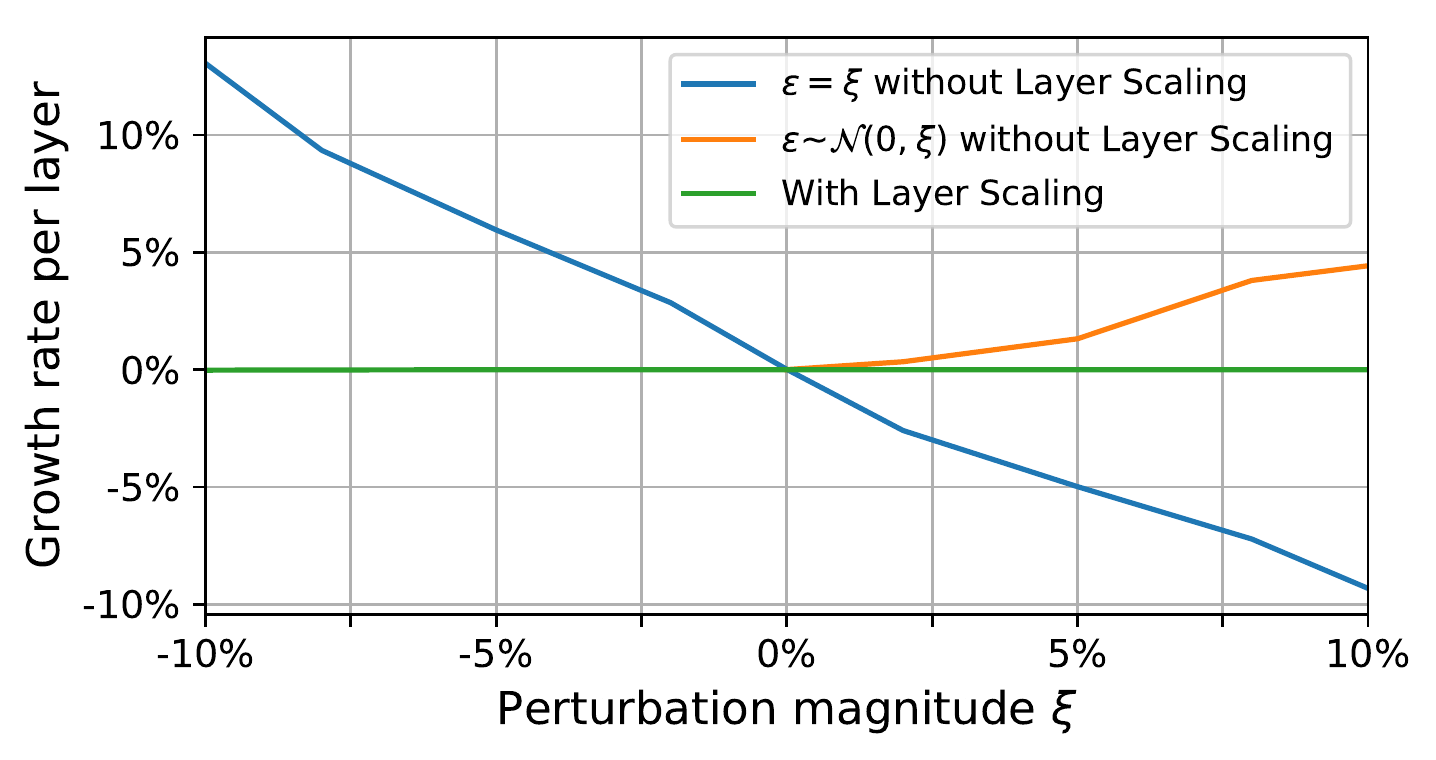}
    \caption{Activation growth.}\label{fig:layer_correction}
  \end{minipage}
  \hspace{15mm}
  \begin{minipage}[b]{0.3\textwidth}
    \centering
    \scalebox{.8}
    {
    \begin{tikzpicture}
        \begin{axis}[ xmin=-14, xmax=14, ymin=-14, ymax=14, axis equal, hide axis]
            \draw (axis cs: 0, 0) circle [radius=10] ;
            \draw[-{Latex[width=0.8mm]}, line width=0.25mm] 
              (axis cs: 0, 10) -- (axis cs: -7, 10)
              node[draw=none,fill=none,midway,above]
              {$\eta\E{|w'|}$} ;
            \draw[] (axis cs: 0, 0) -> (axis cs: 0, 10); 
            \draw (axis cs: 0, 0) -> (axis cs: -7, sqrt{51})
               node[draw=none,fill=none,midway,left]
               {$|w|$} ;
            \draw[-{Latex[width=0.8mm]}, red, line width=0.25mm] 
               (axis cs: 0, 10) -> (axis cs: -7, sqrt{51});
            \node[] at (2.5,8.2) {$\eta\lambda|w|$};
            \draw[dashed] (axis cs: -7, sqrt{51}) -> (axis cs: 0, sqrt{51});  
            \draw[-{Latex[width=0.8mm]}, line width=0.25mm] 
               (axis cs: 0, 10) -> (axis cs: 0, sqrt{51});
        \end{axis}
    \end{tikzpicture}
    }
    \caption{Weight equilibrium.}
    \label{fig:equilibrium_theory}
  \end{minipage}
\end{figure}

\subsection{Invariance to gradient scale}
\label{subsec:grad_magnitude}

When a normalizer follows a linear layer, the normalized output is invariant to the scale of the weights $\left| w \right|$
\cite{DBLP:journals/corr/Ioffe17,DBLP:journals/corr/BaKH16}.  
Scaling the weights by any constant 
is immediately absorbed by the normalizer. 
Therefore, $\nicefrac{\partial{y}}{\partial{\left| w \right|}}$ is zero and gradient descent makes 
steps orthogonal to the weight vector (Figure~\ref{fig:equilibrium_theory}). 
With a fixed learning rate $\eta$, a sequence of steps of size $O(\eta)$ leads to unbounded growth of $\left| w \right|$.
\cramalt{Each successive step will have}{Successive steps have} decreasing relative effect\cramalt{}{s} on the weight change reducing the effective learning rate. 

Others have observed that the $L_2$ weight decay \cite{Krogh:1991:SWD:2986916.2987033} commonly used in normalized networks counteracts the growth of $\left| w \right|$. 
In particular, \cite{DBLP:journals/corr/Laarhoven17b} analyzes this phenomenon, although under a faulty 
assumption that
gradients are not backpropagated through the mean and variance
calculations.
Instead, we observe that weight growth and decay are balanced when weights reach an equilibrium scale (Figure~\ref{fig:equilibrium_theory}). 
We denote the gradient with respect to weights $w'$ and the increment in weights $\Delta w \equiv \eta w'$. 
When $\eta$ and decay factor $\lambda$ are small, solving for equilibrium
\ifappendix
yields (Appendix \ref{apdx:equilibrium}):
\else
yields (see appendix):
\fi
\begin{equation}
\begin{aligned}
\label{eq:equilibrium_a}
\left|w\right| =\sqrt{\frac{\eta}{2\lambda}} \mathbb{E} \left|w'\right| \; .
\end{aligned}
\end{equation}
 
The equilibrium weight magnitude depends on $\eta$. 
When the weights are away from their equilibrium magnitude, such as at initialization and after each learning rate drop, the weights tend to either grow or diminish network-wide. 
This tendency can create a biased error in statistical estimates that can lead to exponential behavior (Section \ref{subsec:activations}). 
 
Scale invariance with respect to the weights means that the learning trajectory depends only on the 
ratio $\nicefrac{\Delta{w}}{\left| w \right|}$ and the problem can be arbitrarily reparametrized as long as 
this ratio is kept constant.  This shows that $L_2$ weight decay does not have a regularizing effect; it only 
corrects for the radial growth artifact introduced by the finite step size of SGD.

When weights are in the equilibrium described by (\ref{eq:equilibrium_a}),  
\begin{equation}
\begin{aligned}
\label{eq:norm_w}
\frac{\Delta{w}}{\left| w \right|} &=  \sqrt{2\eta\lambda} \, \frac{w'}{\E{|w'|}}  \; . 
\end{aligned}
\end{equation} 
This equation shows that learning dynamics are invariant to the scale of the distribution of
gradients $\mathbb{E} \left| w' \right|$.  We also observe that the effective learning rate is 
$\sqrt{2\eta\lambda}$.  This correspondence was independently observed by Page \cite{myrtle}. 
Practitioners tend to use linear scaling of the learning rate with batch size 
\cite{DBLP:journals/corr/GoyalDGNWKTJH17} while keeping the $L_2$ regularization 
constant $\lambda$ fixed. Equation (\ref{eq:norm_w}) shows that this amounts to the 
square root scaling suggested earlier by Krizhevsky~\cite{DBLP:journals/corr/Krizhevsky14}.

\section{Online Normalization} 
\label{sec:online}

\label{subsec:online_alg}

\begin{figure}[t]
  \centering
  \includegraphics[width=\textwidth]{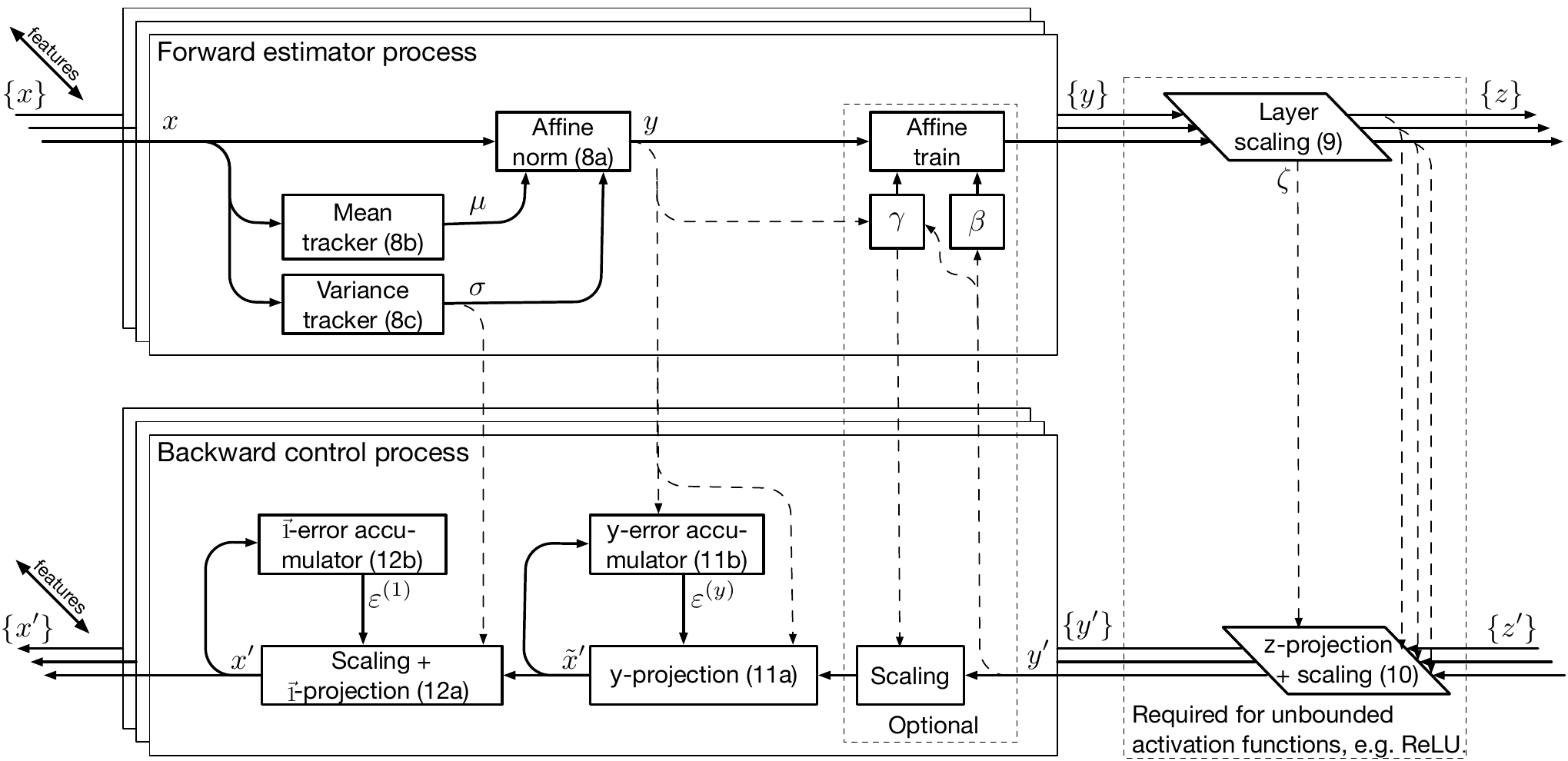}
  \caption{Online Normalization.}
  \label{fig:on_alg}
\end{figure}

To define Online Normalization~(Figure~\ref{fig:on_alg}), we replace arithmetic averages over the full dataset in \eqref{eq:exact} with 
exponentially decaying averages of online samples.  Similarly, projections in (\ref{eq:orthogonal}) and (\ref{eq:grad0}) 
are computed over online data using exponentially decaying inner products. 
The decay factors $\alphaf$ and $\alphab$ for forward and backward passes respectively are hyperparameters for the technique. 

We allow incoming samples $x_t$, such as images, to have multiple scalar components and denote feature-wide 
mean and variance by $\mu\!\left(x_t\right)$ and $\sigma^2\left(x_t\right)$. 
The algorithm also applies to  outputs of fully connected layers  with only one scalar output per feature. 
In fact, this case simplifies to $\mu\!\left(x_t\right)=x_t$ and $\sigma\left(x_t\right)=0$. 
We use scalars $\mu_t$ and $\sigma_t$ to denote running estimates of mean and variance across all samples.  
The subscript $t$ denotes time steps corresponding to processing new incoming samples. 

Online Normalization uses an ongoing process during the forward pass to estimate activation 
means and variances.  It implements the standard online computation of 
mean and variance \cite{finch2009,CIS-49087}
generalized to processing multi-value samples and  exponential averaging of sample statistics.  
The resulting estimates directly lead to an affine normalization transform.  
\begin{subequations}
\label{eq:sn_fwd}
\begin{align}
y_t &= \frac{x_t - \mu_{t-1}}{\sigma_{t-1}} \label{eq:on_y} \\
\mu_t &= \alphaf \mu_{t-1} + (1-\alphaf)\mu\!\left(x_t\right) \label{eq:on_mu}\\
\sigma_t^2 &= \alphaf \sigma_{t-1}^2 + (1-\alphaf)\sigma^2\left(x_t\right) + \alphaf (1 - \alphaf) \left( \mu\!\left(x_t\right) - \mu_{t-1} \right)^2  \label{eq:on_sigma} 
\end{align}
\end{subequations} 

This process removes two degrees of freedom for each feature
that may be restored adding another affine transform with adaptive bias and gain. 
Corresponding equations are standard in normalization literature \cite{DBLP:journals/corr/IoffeS15} and are 
not reproduced here. The forward pass concludes with a layer-scaling stage
that uses data from all features to prevent exponential
growth (Section \ref{subsec:activations}):  
\begin{equation}
\label{eq:l_fwd}
\begin{aligned}
z_t = \frac{y_t }{\zeta_t}\quad \mathrm{with} \quad \zeta_t =  \sqrt{\mu\!\left( \{ y_t^2 \} \right)}\;, 
\end{aligned}
\end{equation} 
where $\{\cdot\}$ includes all features.  

The backward pass proceeds in reverse order, starting with the exact
gradient of layer scaling:
\begin{equation}
\label{eq:on_bwd1}
\begin{aligned}
y'_t = \frac{z^\prime_t - z_t \mu\!\left( \left\{z_t z^\prime_t\right\} \right)}{\zeta_t} \; .
\end{aligned}
\end{equation} 
The backward pass continues through per-feature normalization (\ref{eq:sn_fwd})
using a control mechanism to back out projections defined by (\ref{eq:grad0}). 
We do it in two steps, controlling for orthogonality to $\hy$ first 
\begin{subequations}
\label{eq:on_bwd2}
\begin{align}
\tilde{x}^\prime_t & = y^\prime_t - (1 - \alphab)\varepsilon^{(\hynovec)}_{t-1} y_t\\
\varepsilon^{(\hynovec)}_{t} &= \varepsilon^{(\hynovec)}_{t-1}+ \mu\!\left(\tilde{x}^\prime_t y_t \right)
\end{align} 
\end{subequations} 
and then for the mean-zero condition
\begin{subequations}
\label{eq:on_bwd3}
\begin{align}
\label{eq:on_bwd3z}
x^\prime_t &= \frac{\tilde{x}^\prime_t}{\sigma_{t-1}} - (1 - \alphab) \varepsilon^{(1)}_{t-1}  \\
\varepsilon^{(1)}_t &= \varepsilon^{(1)}_{t-1} +  \mu\!\left(x^\prime_t \right)\;. 
\end{align} 
\end{subequations} 
Gradient scale invariance (Section \ref{subsec:grad_magnitude}) shows\cram{ that}
scaling with the running estimate of input variance $\sigma_t$
in (\ref{eq:on_bwd3z}) is optional 
and can be replaced by rescaling the output $x_t^\prime$ 
with a running average to force it to the unit norm in expectation.  


\paragraph{Formal Properties}
Online Normalization provides arbitrarily good approximations 
of ideal normalization and its gradient.  The quality of approximation is controlled by the hyperparameters $\alphaf$, $\alphab$, and the 
learning rate $\eta$. Parameters $\alphaf$ and $\alphab$ determine the extent of temporal averaging and 
$\eta$ controls the rate of change of the input distribution. 
Online Normalization also satisfies the gradient's orthogonality requirements.  In the course of training, 
the accumulated errors $\varepsilon^{(\hynovec)}_t$ and $\varepsilon^{(1)}_t$ that track deviation from orthogonality~(\ref{eq:grad0}) remain bounded. 
Formal derivations
are in
\ifappendix
Appendix \ref{app:properties}.
\else
the appendix.
\fi

\paragraph{Memory Requirements}
Networks that use Batch Normalization tend to train poorly with small batches.
Larger batches are required for accurate estimates of parameter gradients,
but\cram{ activation} memory usage increases linearly with batch size.
This limits the size of models that can be trained on a given system.
Online Normalization achieves same accuracy
without requiring batches (Section~\ref{sec:experiments}).
Table~\ref{tbl:memory} shows
that using batches for \cram{classification of }2D images leads to a
considerable increase in the memory footprint; for 3D volumes, batching
becomes prohibitive even with modestly sized images.

\begin{table}
\parbox[t][][t]{.40\linewidth}{
\centering
\captionof{table}{Memory for training (GB).}
\label{tbl:memory}
\begin{tabular}{lrr|rr}
\toprule
\multirow{2}{*}{Network} & \multicolumn{2}{r}{Online}
                         & \multicolumn{2}{c}{Batch} \\
                         & \multicolumn{2}{r}{Norm}      &  32  & 128 \\
\midrule
\multicolumn{2}{l}{ResNet-50, ImageNet}   &   1  &   2  &   4 \\
\multicolumn{2}{l}{ResNet-50, PyTorch$^a$}
                                                  &   2  &   5  &  15 \\
\multicolumn{2}{l}{U-Net, $150^3$ voxels} &   1  &  29  & 115 \\
\multicolumn{2}{l}{U-Net, $250^3$ voxels} &   6  & 195  & 785 \\
\multicolumn{2}{l}{U-Net, $1024^2$ pixels} &   2  &  31  & 123 \\
\multicolumn{2}{l}{U-Net, $2048^2$ pixels} &   5  & 137  & 546 \\
\bottomrule
\multicolumn{5}{@{}p{0.39\textwidth}@{}}{%
\raggedright\footnotesize
$^a$ PyTorch stores multiple copies
of activations for improved performance.
}
\end{tabular}
}
\hfill
\parbox[t][][t]{.55\linewidth}{
\centering
\captionof{table}{Best validation: loss (accuracy\%).}
\label{tbl:accuracy}
\sisetup{parse-numbers=false, input-symbols=()}
\addtolength{\tabcolsep}{-4pt}
\begin{tabular}{lrlrlrl}
\toprule
   Normalizer
& \multicolumn{2}{p{0cm}}{\centering \mbox{CIFAR-10}\\\mbox{ResNet-20}}
& \multicolumn{2}{p{0cm}}{\centering \mbox{CIFAR-100}\\\mbox{ResNet-20}}
& \multicolumn{2}{p{0cm}}{\centering ImageNet\\\mbox{ResNet-50}}     \\
\midrule
Online      &  {\bf 0.26} & ({\bf 92.3})
            &\,{\bf 1.12} & ({\bf 68.6})
            &\,{\bf 0.94} &    (76.3) \\
Batch$^a$
            & {\bf 0.26} &    (92.2)
            &      1.14  & ({\bf 68.6})
            &      0.97  & ({\bf 76.4}) \\
Group       &      0.32  &    (90.3)
            &      1.35  &    (63.3)
            &            &    (75.9)$^b$\\
Instance    &      0.31  &    (90.4)
            &      1.32  &    (63.1)
            &            &    (71.6)$^b$\\
Layer       &      0.39  &    (87.4)
            &      1.47  &    (59.2)
            &            &    (74.7)$^b$\\
Weight      & \multicolumn{2}{c}{-}
            & \multicolumn{2}{c}{-}
            &            & (67\hspace{8pt})$^b$\\
Propagation & \multicolumn{2}{c}{-}
            & \multicolumn{2}{c}{-}
            &            &   (71.9)$^b$\\
\bottomrule
\multicolumn{7}{@{}p{1\linewidth}@{}}{
{\raggedright\footnotesize
$^a$ Batch size 128 for CIFAR and 32 for ImageNet. \\
$^b$ Data from \cite{DBLP:journals/corr/abs-1803-08494,DBLP:journals/corr/abs-1709-08145,Shang:2017:END:3298239.3298459}.}
}
\end{tabular}
}
\end{table}

\section{Experiments} 
\label{sec:experiments}
\begin{figure}
\centering
\begin{minipage}[b]{0.49\textwidth}
\centering
\includegraphics[width=\linewidth,keepaspectratio=true]{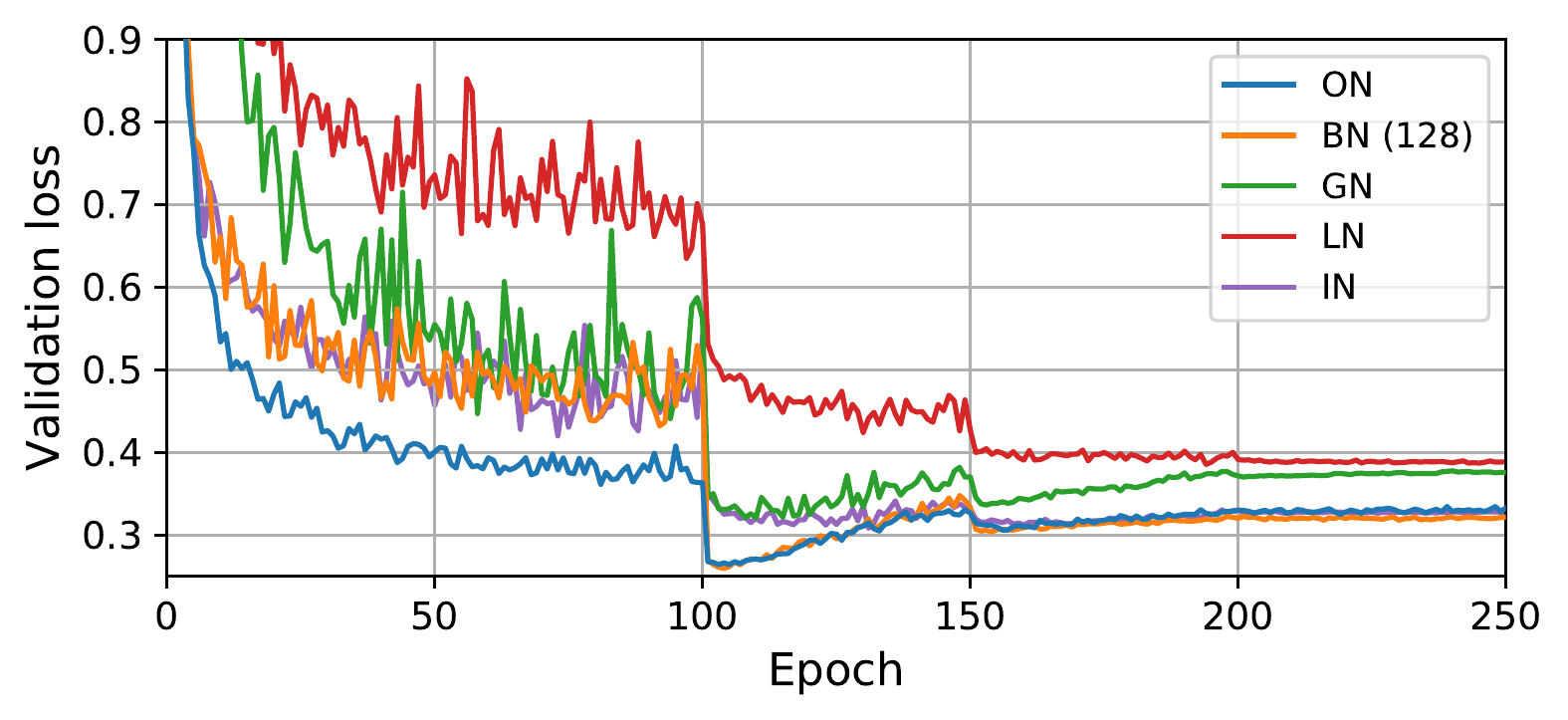}
\captionof{figure}{CIFAR-10 / ResNet-20.}
\label{fig:cifar10}
\end{minipage}
\begin{minipage}[b]{0.49\textwidth}
\centering
\includegraphics[width=\linewidth,keepaspectratio=true]{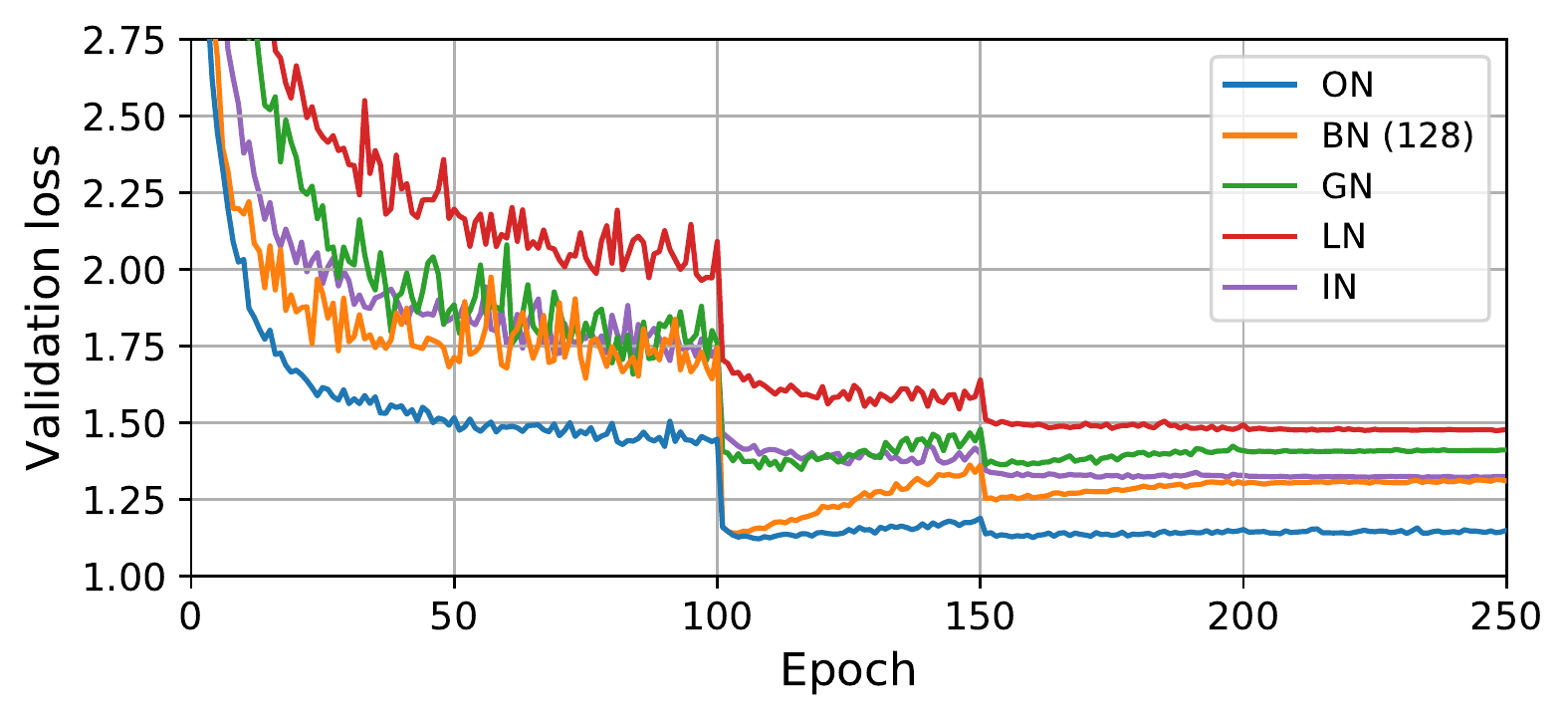}
\captionof{figure}{CIFAR-100 / ResNet-20.}
\label{fig:cifar100}
\end{minipage}
\begin{minipage}[b]{0.49\textwidth}
\centering
\includegraphics[width=\linewidth,keepaspectratio=true]{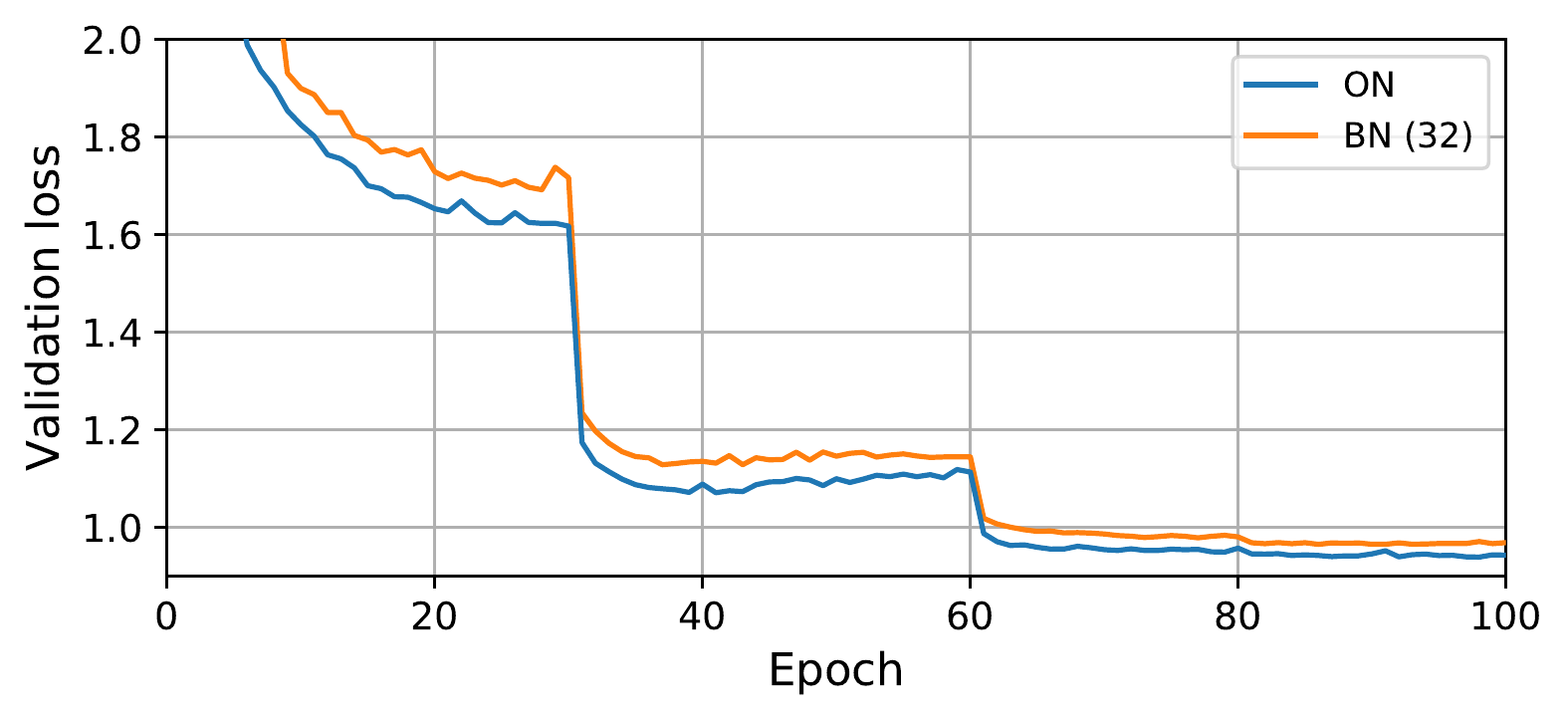}
\captionof{figure}{ImageNet / ResNet-50.}
\label{fig:imagenet}
\end{minipage}
\begin{minipage}[b]{0.49\textwidth}
\centering
\includegraphics[width=\linewidth,keepaspectratio=true]{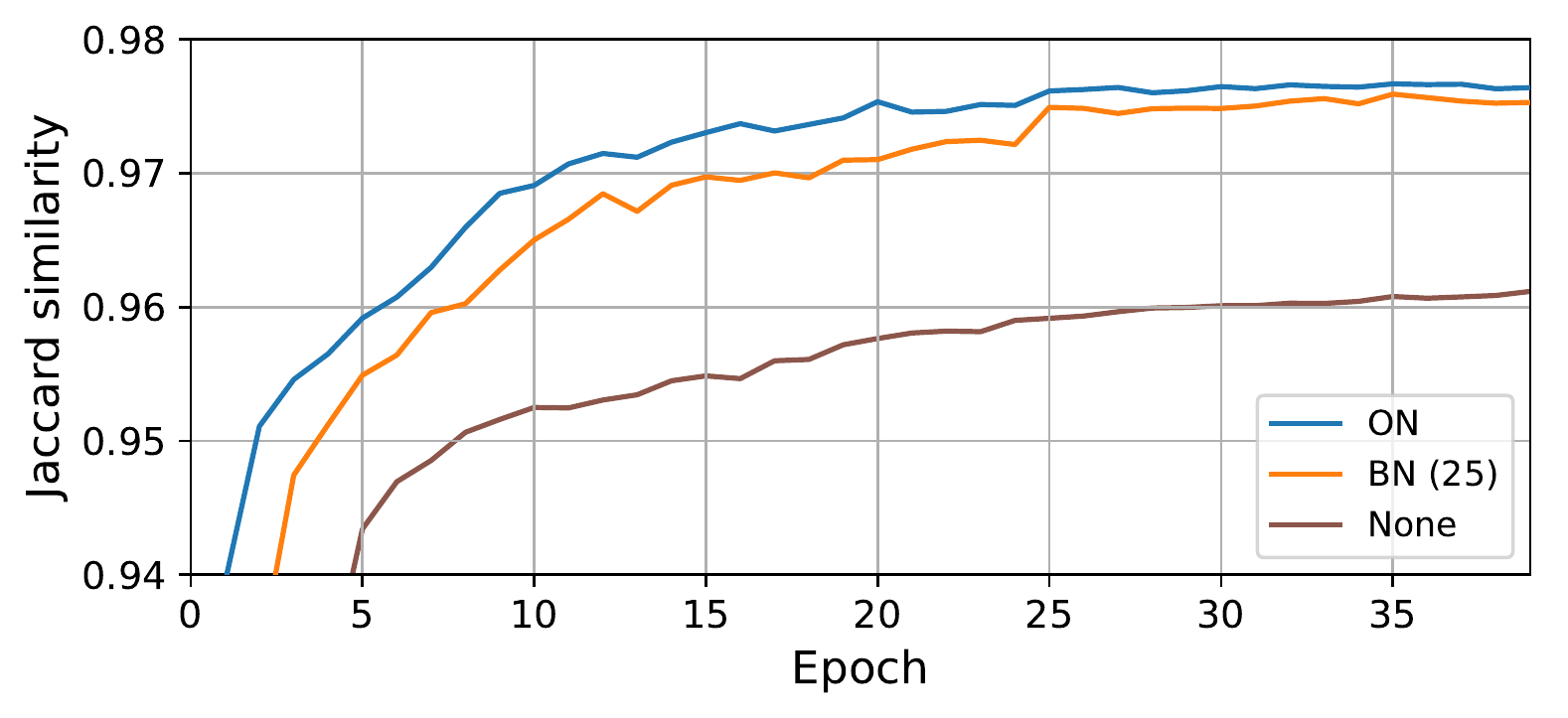}
\captionof{figure}{Image Segmentation with U-Net.}
\label{fig:unet}
\end{minipage}
\end{figure}
We demonstrate Online Normalization in a variety of settings.
In our experience it has ported easily to new networks and tasks.
Details for replicating experiments
as well as statistical characterization of experiment reproducibility
are in
\ifappendix 
Appendix~\ref{apdx:exp_details}.
\else
the appendix.
\fi
Scripts to reproduce our results are in the companion repository~\cite{online_norm_github}.

{\em CIFAR image classification
(Figures~\ref{fig:cifar10}-\ref{fig:cifar100}, Table~\ref{tbl:accuracy}).}
Our experiments start with the
best-published hyperparameter settings for ResNet-20
\cite{DBLP:conf/cvpr/HeZRS16} for use with Batch Normalization
on a single GPU. We accept these hyperparameters as fixed values for use with
Online Normalization.
Online Normalization introduces two hyperparameters, decay rates
$\alphaf$ and $\alphab$. We used a logarithmic grid sweep to determine
good settings. Then we ran five independent trials for each normalizer.
Online Normalization had the best validation performance of all compared
methods.

{\em ImageNet image classification
(Figure~\ref{fig:imagenet}, Table~\ref{tbl:accuracy}).}
For the ResNet-50~\cite{DBLP:conf/cvpr/HeZRS16} experiment,
we are reporting the single experimental run that we conducted.
This trial used decay factors chosen based on the CIFAR experiments.
Even better results should be possible with a sweep. Our training
procedure is based on a protocol tuned for Batch Normalization~\cite{resnet_tf}.
Even without tuning, Online Normalization achieves the best validation
loss of all methods. At validation time it is nearly as accurate as 
Batch Normalization and both methods are better than other compared methods. 

{\em U-Net image segmentation (Figure~\ref{fig:unet}).}
The U-Net~\cite{DBLP:journals/corr/RonnebergerFB15} architecture
has applications in segmenting 2D and 3D images.
It has been applied to volumetric segmentation in 
3D scans~\cite{DBLP:journals/corr/CicekALBR16}.
Volumetric convolutions require large memories for
activations~(Table~\ref{tbl:memory}), making Batch Normalization impractical.
Our small-scale experiment performs image segmentation on a synthetic shape
dataset~\cite{pytorch-unet}. Online Normalization achieves the best Jaccard
similarity coefficient among compared methods.

\begin{minipage}[t]{\textwidth}
\begin{minipage}[b]{0.49\textwidth}
\centering
\includegraphics[width=\linewidth,keepaspectratio=true]{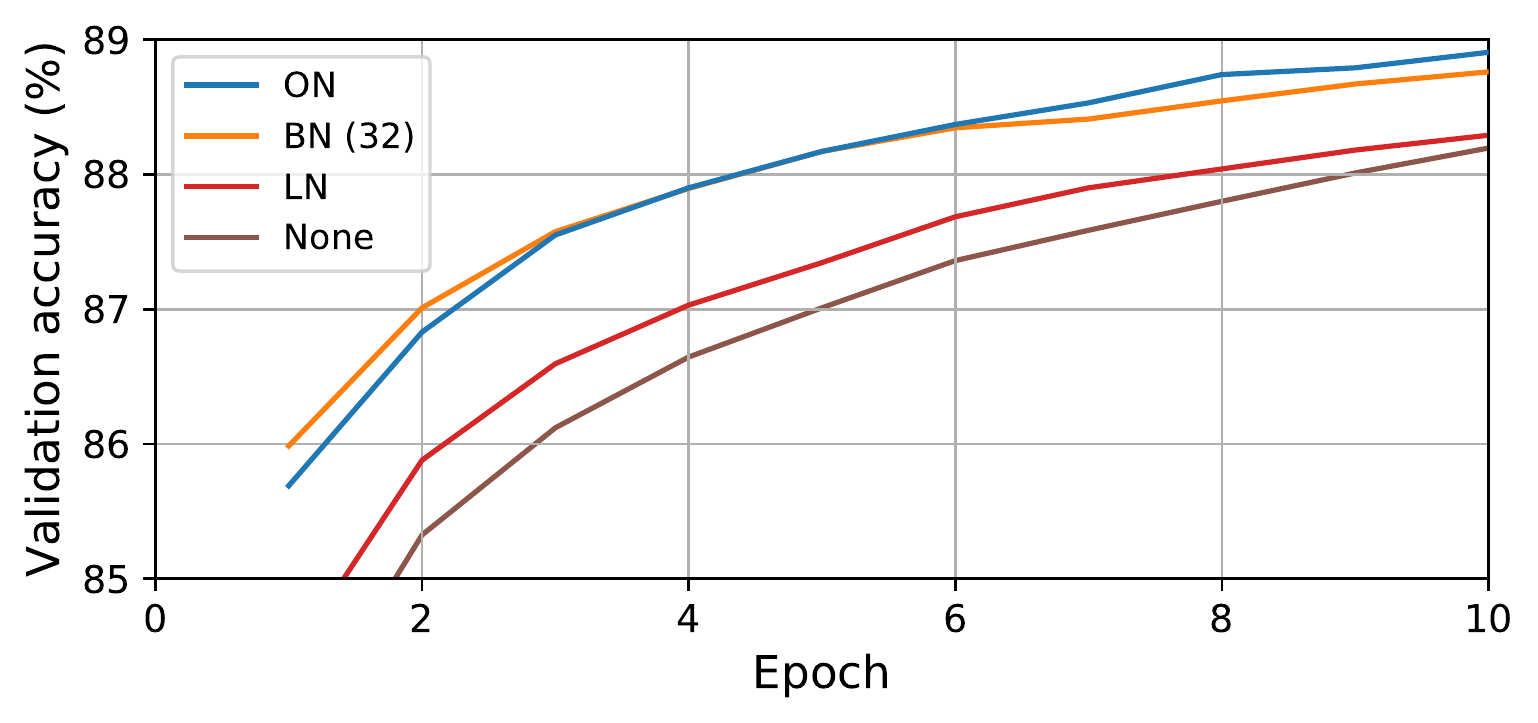}
\captionof{figure}{FMNIST with MLP.}
\label{fig:fc}
\end{minipage}
\hfill
\begin{minipage}[b]{0.49\textwidth}
\centering
\includegraphics[width=\linewidth,keepaspectratio=true]{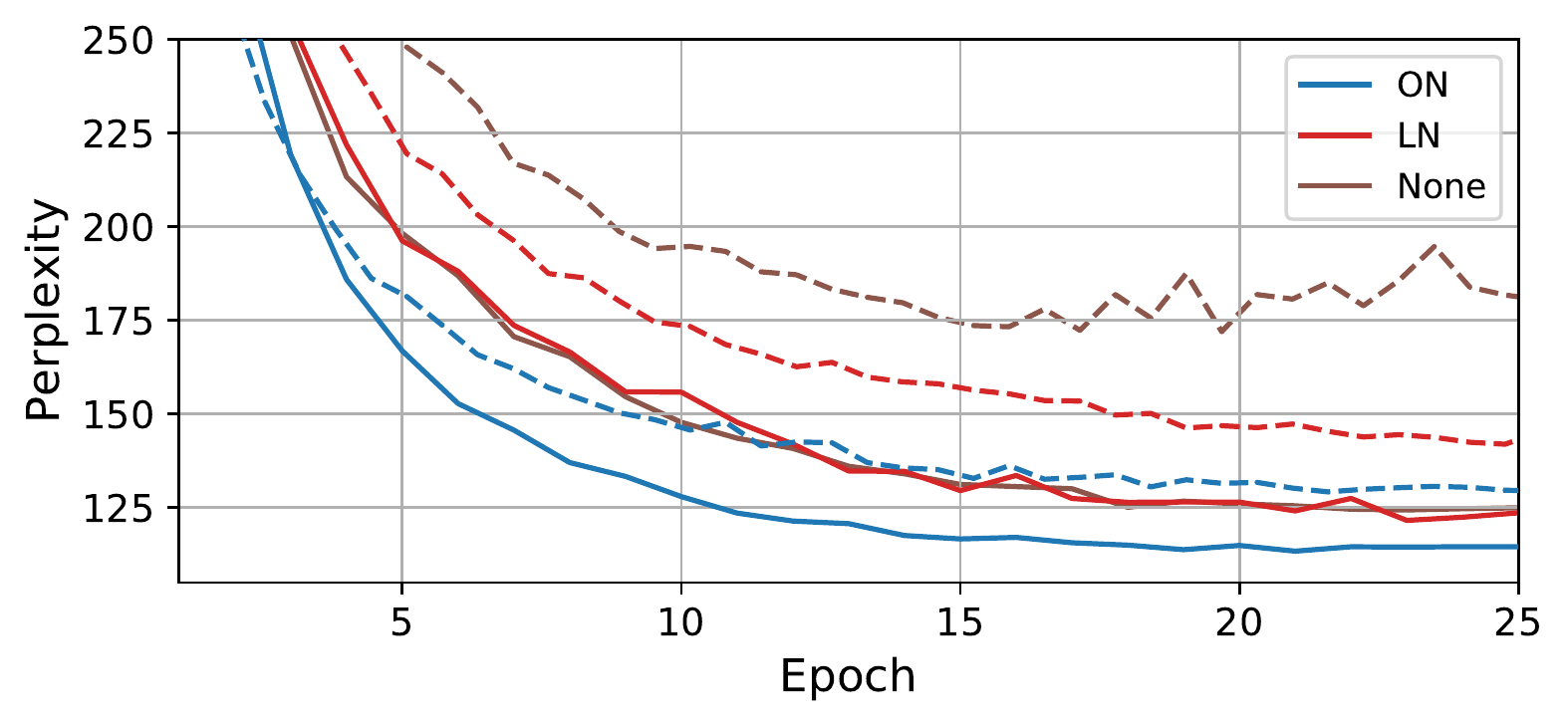}
\captionof{figure}{RNN (dashed) and LSTM (solid).}
\label{fig:rnnlstm}
\end{minipage}
\end{minipage}

{\em Fully-connected network (Figure~\ref{fig:fc}).}
Online Normalization also works when normalizer inputs are single scalars.
We used a three-layer fully connected network,
500+300 HU~\cite{lecun-mnisthandwrittendigit-2010}, for the
Fashion MNIST~\cite{DBLP:journals/corr/abs-1708-07747} classification task.
Fashion MNIST is a harder task than MNIST digit recognition, and therefore
provides more discrimination power
in our comparison. The initial learning trajectory shows Online Normalization
outperforms the other normalizers.

{\em Recurrent language modeling (Figure~\ref{fig:rnnlstm}).}
Online Normalization works without modification in recurrent networks. 
It maintains statistics using information from all 
previous samples and time steps. 
This information is representative of the distribution of all recurrent activations, allowing Online Normalization to work in the presence of circular dependencies (Section~\ref{sec:related_work}). 
We train word based language models of 
PTB~\cite{Marcus:1994:PTA:1075812.1075835} using single layer RNN and LSTM.
The LSTM network uses normalization on the four gate activation functions, 
but not the memory cell. 
This allows the memory cell to encode a persistent state for unbounded time 
without normalization forcing it to zero mean. 
In both the RNN and LSTM, Online Normalization performs better than the other methods.
Remarkably, the RNN using Online Normalization performs nearly as well as the unnormalized LSTM.

\section{Conclusion} 
Online Normalization is a robust normalizer that performs competitively with
the best normalizers for large-scale networks and works for cases where other
normalizers do not apply. The technique is formally derived and straightforward
to implement. The gradient of normalization is remarkably simple: it is only a
linear projection and scaling.

There have been concerns in the field that normalization violates the paradigm
of SGD \cite{DBLP:journals/corr/Ioffe17, DBLP:journals/corr/SalimansK16, DBLP:conf/icml/ArpitZKG16}. 
A main tenet of SGD is that noisy measurements can be averaged to the
true value of the gradient.  Batch normalization has a fundamental gradient
bias dependent on the batch size that cannot be eliminated by additional
averaging or reduction in the learning rate. Because Batch Normalization requires
batches, it leaves the value of the gradient for any individual input undefined.
This within-batch computation has been seen as biologically implausible
\cite{DBLP:journals/corr/LiaoKP16}.

In contrast, we have shown that the normalization operator and its gradient
can be implemented locally within individual neurons. The computation does
not require keeping track of specific prior activations. Additionally,
normalization allows neurons to locally maintain input weights at any
scale of choice--without coordinating with other neurons. Finally any
gradient signal generated by the neuron is also scale-free and independent of
gradient scale employed by other neurons.
In aggregate ideal normalization \eqref{eq:norm0} provides
stability and localized computation for all three phases of gradient descent:
forward propagation, backward propagation, and weight update.
Other methods do not
have this property.  For instance, Layer Normalization requires layer-wide 
communication and 
Batch Normalization is implemented by computing within-batch dependencies.

We expect normalization to remain important as the community continues to
explore larger and deeper networks. Memory will become even more precious in
this scenario.  Online Normalization enables batch-free training resulting in
over an order of magnitude reduction of activation memory.


\subsubsection*{Acknowledgments}
We thank Rob Schreiber, Gary Lauterbach, Natalia Vassilieva, Andy Hock, 
Scott James  and Xin Wang for their help and comments that greatly 
improved the manuscript.  
We thank Devansh Arpit for insightful discussions.
We also thank Natalia Vassilieva for 
modeling memory requirements for U-Net and Michael Kural for 
work on this project during his internship. 


\bibliographystyle{unsrt}  
\bibliography{online_norm}  

\begin{thebibliography}{10}

\bibitem{ILSVRC15}
Olga Russakovsky, Jia Deng, Hao Su, Jonathan Krause, Sanjeev Satheesh, Sean Ma,
  Zhiheng Huang, Andrej Karpathy, Aditya Khosla, Michael Bernstein,
  Alexander~C. Berg, and Li~Fei-Fei.
\newblock {ImageNet} large scale visual recognition challenge.
\newblock {\em International Journal of Computer Vision (IJCV)},
  115(3):211--252, 2015.

\bibitem{DBLP:conf/cvpr/HeZRS16}
Kaiming He, Xiangyu Zhang, Shaoqing Ren, and Jian Sun.
\newblock Deep residual learning for image recognition.
\newblock In {\em 2016 {IEEE} Conference on Computer Vision and Pattern
  Recognition, {CVPR} 2016, Las Vegas, NV, USA, June 27-30, 2016}, pages
  770--778, 2016.

\bibitem{online_norm_github}
Vitaliy Chiley, Michael James, and Ilya Sharapov.
\newblock {Online Normalization reference implementation}.
\newblock \url{https://github.com/cerebras/online-normalization}, 2019.

\bibitem{DBLP:journals/corr/IoffeS15}
Sergey Ioffe and Christian Szegedy.
\newblock Batch normalization: Accelerating deep network training by reducing
  internal covariate shift.
\newblock {\em CoRR}, abs/1502.03167, 2015.

\bibitem{DBLP:journals/corr/Ioffe17}
Sergey Ioffe.
\newblock Batch renormalization: Towards reducing minibatch dependence in
  batch-normalized models.
\newblock {\em CoRR}, abs/1702.03275, 2017.

\bibitem{DBLP:journals/corr/BaKH16}
Lei~Jimmy Ba, Ryan Kiros, and Geoffrey~E. Hinton.
\newblock Layer normalization.
\newblock {\em CoRR}, abs/1607.06450, 2016.

\bibitem{DBLP:journals/corr/abs-1803-08494}
Yuxin Wu and Kaiming He.
\newblock Group normalization.
\newblock {\em CoRR}, abs/1803.08494, 2018.

\bibitem{DBLP:journals/corr/SalimansK16}
Tim Salimans and Diederik~P. Kingma.
\newblock Weight normalization: {A} simple reparameterization to accelerate
  training of deep neural networks.
\newblock {\em CoRR}, abs/1602.07868, 2016.

\bibitem{DBLP:conf/icml/ArpitZKG16}
Devansh Arpit, Yingbo Zhou, Bhargava~Urala Kota, and Venu Govindaraju.
\newblock Normalization propagation: {A} parametric technique for removing
  internal covariate shift in deep networks.
\newblock In {\em Proceedings of the 33nd International Conference on Machine
  Learning, {ICML} 2016, New York City, NY, USA, June 19-24, 2016}, pages
  1168--1176, 2016.

\bibitem{ulyanov2016instance}
Dmitry Ulyanov, Andrea Vedaldi, and Victor Lempitsky.
\newblock Instance normalization: The missing ingredient for fast stylization.
\newblock {\em arXiv preprint arXiv:1607.08022}, 2016.

\bibitem{DBLP:journals/corr/LiaoKP16}
Qianli Liao, Kenji Kawaguchi, and Tomaso~A. Poggio.
\newblock Streaming normalization: Towards simpler and more
  biologically-plausible normalizations for online and recurrent learning.
\newblock {\em CoRR}, abs/1610.06160, 2016.

\bibitem{DBLP:journals/corr/CooijmansBLC16}
Tim Cooijmans, Nicolas Ballas, C{\'{e}}sar Laurent, and Aaron~C. Courville.
\newblock Recurrent batch normalization.
\newblock {\em CoRR}, abs/1603.09025, 2016.

\bibitem{7472159}
C.~{Laurent}, G.~{Pereyra}, P.~{Brakel}, Y.~{Zhang}, and Y.~{Bengio}.
\newblock Batch normalized recurrent neural networks.
\newblock In {\em 2016 IEEE International Conference on Acoustics, Speech and
  Signal Processing (ICASSP)}, pages 2657--2661, March 2016.

\bibitem{DBLP:journals/corr/AmodeiABCCCCCCD15}
Dario Amodei, Rishita Anubhai, Eric Battenberg, Carl Case, Jared Casper, Bryan
  Catanzaro, Jingdong Chen, Mike Chrzanowski, Adam Coates, Greg Diamos, Erich
  Elsen, Jesse Engel, Linxi Fan, Christopher Fougner, Tony Han, Awni~Y. Hannun,
  Billy Jun, Patrick LeGresley, Libby Lin, Sharan Narang, Andrew~Y. Ng, Sherjil
  Ozair, Ryan Prenger, Jonathan Raiman, Sanjeev Satheesh, David Seetapun,
  Shubho Sengupta, Yi~Wang, Zhiqian Wang, Chong Wang, Bo~Xiao, Dani Yogatama,
  Jun Zhan, and Zhenyao Zhu.
\newblock Deep speech 2: End-to-end speech recognition in english and mandarin.
\newblock {\em CoRR}, abs/1512.02595, 2015.

\bibitem{CIFAR}
Alex Krizhevsky, Vinod Nair, and Geoffrey Hinton.
\newblock {CIFAR}-10 ({C}anadian {I}nstitute for {A}dvanced {R}esearch).
\newblock http://www.cs.toronto.edu/~kriz/cifar.html.

\bibitem{Krogh:1991:SWD:2986916.2987033}
Anders Krogh and John~A. Hertz.
\newblock A simple weight decay can improve generalization.
\newblock In {\em Proceedings of the 4th International Conference on Neural
  Information Processing Systems}, NIPS'91, pages 950--957, San Francisco, CA,
  USA, 1991. Morgan Kaufmann Publishers Inc.

\bibitem{DBLP:journals/corr/Laarhoven17b}
Twan van Laarhoven.
\newblock {L2} regularization versus batch and weight normalization.
\newblock {\em CoRR}, abs/1706.05350, 2017.

\bibitem{myrtle}
David Page.
\newblock How to train your {ResNet}.
\newblock \url{https://www.myrtle.ai/2018/09/24/how_to_train_your_resnet/},
  2018.

\bibitem{DBLP:journals/corr/GoyalDGNWKTJH17}
Priya Goyal, Piotr Doll{\'{a}}r, Ross~B. Girshick, Pieter Noordhuis, Lukasz
  Wesolowski, Aapo Kyrola, Andrew Tulloch, Yangqing Jia, and Kaiming He.
\newblock Accurate, large minibatch {SGD:} training imagenet in 1 hour.
\newblock {\em CoRR}, abs/1706.02677, 2017.

\bibitem{DBLP:journals/corr/Krizhevsky14}
Alex Krizhevsky.
\newblock One weird trick for parallelizing convolutional neural networks.
\newblock {\em CoRR}, abs/1404.5997, 2014.

\bibitem{finch2009}
Tony Finch.
\newblock Incremental calculation of weighted mean and variance.
\newblock
  \url{http://people.ds.cam.ac.uk/fanf2/hermes/doc/antiforgery/stats.pdf},
  2009.

\bibitem{CIS-49087}
Tony~F. Chan, Gene~H. Golub, and Randall~J. LeVeque.
\newblock Algorithms for computing the sample variance: Analysis and
  recommendations.
\newblock {\em The American Statistician}, 37:242--247, 1983.

\bibitem{DBLP:journals/corr/abs-1709-08145}
Igor Gitman and Boris Ginsburg.
\newblock Comparison of batch normalization and weight normalization algorithms
  for the large-scale image classification.
\newblock {\em CoRR}, abs/1709.08145, 2017.

\bibitem{Shang:2017:END:3298239.3298459}
Wenling Shang, Justin Chiu, and Kihyuk Sohn.
\newblock Exploring normalization in deep residual networks with concatenated
  rectified linear units.
\newblock In {\em Proceedings of the Thirty-First AAAI Conference on Artificial
  Intelligence}, AAAI'17, pages 1509--1516. AAAI Press, 2017.

\bibitem{resnet_tf}
{ResNet in TensorFlow}.
\newblock
  \url{https://github.com/tensorflow/models/tree/r1.9.0/official/resnet}, 2018.

\bibitem{DBLP:journals/corr/RonnebergerFB15}
Olaf Ronneberger, Philipp Fischer, and Thomas Brox.
\newblock U-net: Convolutional networks for biomedical image segmentation.
\newblock {\em CoRR}, abs/1505.04597, 2015.

\bibitem{DBLP:journals/corr/CicekALBR16}
{\"{O}}zg{\"{u}}n {\c{C}}i{\c{c}}ek, Ahmed Abdulkadir, Soeren~S. Lienkamp,
  Thomas Brox, and Olaf Ronneberger.
\newblock 3d u-net: Learning dense volumetric segmentation from sparse
  annotation.
\newblock {\em CoRR}, abs/1606.06650, 2016.

\bibitem{pytorch-unet}
Naoto Usuyama.
\newblock {Simple PyTorch implementations of U-Net/FullyConvNet for image
  segmentation}.
\newblock \url{https://github.com/usuyama/pytorch-unet}, 2018.

\bibitem{lecun-mnisthandwrittendigit-2010}
Yann LeCun and Corinna Cortes.
\newblock {MNIST} handwritten digit database.
\newblock 2010.

\bibitem{DBLP:journals/corr/abs-1708-07747}
Han Xiao, Kashif Rasul, and Roland Vollgraf.
\newblock Fashion-{MNIST}: a novel image dataset for benchmarking machine
  learning algorithms.
\newblock {\em CoRR}, abs/1708.07747, 2017.

\bibitem{Marcus:1994:PTA:1075812.1075835}
Mitchell Marcus, Grace Kim, Mary~Ann Marcinkiewicz, Robert MacIntyre, Ann Bies,
  Mark Ferguson, Karen Katz, and Britta Schasberger.
\newblock The {Penn Treebank}: Annotating predicate argument structure.
\newblock In {\em Proceedings of the Workshop on Human Language Technology},
  HLT '94, pages 114--119, Stroudsburg, PA, USA, 1994. Association for
  Computational Linguistics.

\bibitem{mnist_web}
{The MNIST Database}.
\newblock \url{http://yann.lecun.com/exdb/mnist/}.

\bibitem{DBLP:journals/corr/PressW16}
Ofir Press and Lior Wolf.
\newblock Using the output embedding to improve language models.
\newblock {\em CoRR}, abs/1608.05859, 2016.

\bibitem{Sutskever:2013:IIM:3042817.3043064}
Ilya Sutskever, James Martens, George Dahl, and Geoffrey Hinton.
\newblock On the importance of initialization and momentum in deep learning.
\newblock In {\em Proceedings of the 30th International Conference on
  International Conference on Machine Learning - Volume 28}, ICML'13, pages
  III--1139--III--1147. JMLR.org, 2013.

\end{thebibliography}

\ifappendix 

\newpage 

\begin{appendices}

\section{Experimental details}
\label{apdx:exp_details}
We give an overview of experimental details for the results presented in the paper. 
All experiments were performed on Amazon's EC2 P3 single GPU instances.

\subsection{ResNet}
\label{apdx:resnet_exp_details}

We train ResNet using the SGD with momentum optimizer. 
$L_2$ regularization is applied. 
A learning rate decay factor is applied at predefined epochs. 
Training procedure and hyperparameters are adapted from \cite{resnet_tf}.

For CIFAR10 and CIFAR100 training, we adopt the hyperparameters optimized for 
training using Batch Normalization. 
Performing a hyperparameter search for the network with Online Normalization is 
expected to produce better results. 
We perform a logarithmic sweep from $\nicefrac{1}{2}$ through $\nicefrac{4095}{4096}$ 
to set the forward and backward decay factors $\alphaf$ and $\alphab$. 
Then we perform five independent runs for the network with Batch Normalization 
and Online Normalization. 
The results shown in Figure~\ref{fig:cifar10}-\ref{fig:cifar100} are a median 
of the five independent results.

We conduct and report only a single 
experimental run for ImageNet training. 
When using Batch Normalization, the optimal hyperparameters for training 
ImageNet are given in \cite{DBLP:conf/cvpr/HeZRS16} where training was done at 
batch size 256. 
We train our network using batch sizes appropriate for single GPU training. 
The momentum and learning rate hyperparameters are adapted using the 
scaling rules found in Appendix~\ref{apdx:hyperparam_scaling}. 
For training ResNet with Online Normalization we use the same hyperparameters 
used for training with Batch Normalization 
and set decay factors based on CIFAR10 experiments.
Performing a hyperparameter search for all hyperparameters is expected to produce 
better performance.

All hyperparameters are summarized in Table~\ref{tbl:resnet_hyper_params}.

\begin{table}[h]
\centering
\captionof{table}{ResNet Training Hyperparameters.}
\begin{tabular}{l|ccc}
\toprule
Dataset                              & ImageNet           & CIFAR10                                     & CIFAR100                                    \\
Network                              & ResNet50           & ResNet20                                    & ResNet20                                    \\ \midrule
Epochs                               & 100                & 250                                         & 250                                         \\
Batch size                           & 32                 & 128                                         & 128                                         \\
Learning rate ($\eta$)               & 0.01308            & 0.1                                         & 0.1                                         \\
Optimizer momentum ($\mu$)           & 0.98692            & 0.9                                         & 0.9                                         \\
$L_2$ constant ($\lambda$)              & $10^{-4}$          & $2\times10^{-4}$                            & $2\times10^{-4}$                            \\
LR decay factor                      & 0.1                & 0.1                                         & 0.1                                         \\
LR decay epochs                      & \{30, 60, 80, 90\} & \{100, 150, 200\}                           & \{100, 150, 200\}                           \\ \midrule
Forward decay factor ($\alphaf$)  & .999               & $\nicefrac{1023}{1024}$                     & $\nicefrac{511}{512}$                       \\
Backward decay factor ($\alphab$) & .99                & $\nicefrac{127}{128}$                       & $\nicefrac{15}{16}$                         \\ \bottomrule
\end{tabular}
\label{tbl:resnet_hyper_params}
\end{table}

\subsection{U-Net}
\label{apdx:unet_exp_details}

U-Net is trained updating parameters at an update cadence of 25. 
Training is done for 40 epochs using the SGD with momentum optimizer on a 
synthetic image dataset~\cite{pytorch-unet}. 
L2 regularization is applied. 
A learning rate (LR) decay factor is applied at epoch 25. 
The dataset uses 2000 samples in the training set and 200 samples in the 
validation set. 
Synthetic dataset generation and model definition are adapted 
from \cite{pytorch-unet}.
U-Net is trained using no normalization, Batch Normalization and 
Online Normalization. 
Normalization is added before each ReLU as in \cite{DBLP:journals/corr/CicekALBR16}. 
Learning rate, $\eta = m\times10^{-n}$, sweeps are performed on the network 
with no normalization and on the network with Batch Normalization. 
$m$ and $n$ are swept in the ranges 0 to 9 and 0 to 5 respectively using a step size of 1. 
We use Online Normalization as a drop-in replacement for Batch Normalization. 
The network with Online Normalization uses the learning rate found 
to perform optimally in the network with Batch Normalization. 
Logarithmic sweeps from $\nicefrac{15}{16}$ to $\nicefrac{32767}{32768}$ 
and $\nicefrac{1}{2}$ to $\nicefrac{8191}{8192}$ are performed to set the 
forward and backward decay factors respectively.
All hyperparameters are summarized in Table~\ref{tbl:unet_hyper_params}.

For U-Net training, and subsequent examples, we observe relatively high run to run 
variability because the datasets are small.
Training the network without normalization produced a few outliers which 
show poor average performance. 
We report the median of 50 runs (Figure~\ref{fig:unet}); reporting the mean would unfairly 
misrepresent the network without 
normalization as having poor expected performance. 

\begin{table}[h]
\centering
\captionof{table}{U-Net Training Hyperparameters.}
\begin{tabular}{l|ccc}
\toprule
Normalizer                        & ON                  & BN        & -         \\ \midrule
Learning rate ($\eta$)            & 0.04                & 0.04      & 0.6       \\
Optimizer momentum ($\mu$)        & 0.9                 & 0.9       & 0.9       \\
$L_2$ constant ($\lambda$)           & $10^{-6}$           & $10^{-6}$ & $10^{-6}$ \\
LR decay factor                   & 0.1                 & 0.1       & 0.1       \\
LR decay epoch                    & 25                  & 25        & 25        \\ \midrule
Forward decay factor ($\alphaf$)  & $\nicefrac{63}{64}$ & -         & -         \\
Backward decay factor ($\alphab$) & $\nicefrac{1}{2}$   & -         & -         \\ \bottomrule
\end{tabular}
\label{tbl:unet_hyper_params}
\end{table}

\subsection{Fully Connected}
\label{apdx:fc_exp_details}

To test the Online Normalization technique on fully connected networks we use a 
three-layer dense network, 500+300 hidden units (3-layer NN, 500+300 HU, softmax, cross entropy, weight decay \cite{lecun-mnisthandwrittendigit-2010, mnist_web}), with ReLU activation functions 
on the Fashion MNIST~\cite{DBLP:journals/corr/abs-1708-07747} classification task. 
The network is trained using the SGD optimizer and $L_2$ regularization.
We consider three cases: without normalization, using Batch Normalization, 
Layer Normalization and Online Normalization.
A learning rate sweep in the range 0.001 to 0.02 using a step size of 0.001 and 
the range 0.02 to 0.1 using a step size of 0.01 is performed for the network 
without normalization and with Batch Normalization. 
The networks using Layer Normalization and Online Normalization use the same  
hyperparameters found to be optimal for training when using Batch Normalization.
A logarithmic sweep from $\nicefrac{1}{2}$ to $\nicefrac{8191}{8192}$ is performed 
to set the forward and backward decay factors. 
The optimum setting closely matched the hyperparameters used for ImageNet training.
All hyperparameters are summarized in Table~\ref{tbl:fc_hyper_params}.

\begin{table}[h]
\centering
\captionof{table}{Fully Connected Network Training Hyperparameters.}
\begin{tabular}{l|c}
\toprule
Epoch                                & 10                 \\
Batch size                           & 32                 \\
Learning rate ($\eta$)               & $4 \times 10^{-2}$ \\
$L_2$ constant ($\lambda$)              & $10^{-4}$          \\ \midrule
Forward decay factor ($\alphaf$)  & 0.999              \\
Backward decay factor ($\alphab$) & 0.99               \\ \bottomrule
\end{tabular}
\label{tbl:fc_hyper_params}
\end{table}

\subsection{Recurrent Neural Network}
\label{apdx:rnn_exp_details}

\label{apdx:rnn}

For the recurrent network experiments we use single layer RNN and LSTM networks. 
The embedding and decoder are "tied" to share parameters as described in \cite{DBLP:journals/corr/PressW16}. 
The networks are trained using SGD and $L_2$ regularization.
The sequence length is selected uniformly in the range $[1, 128]$ to preclude the 
network from learning a sequence length. 
The recurrent networks are trained in three settings: using no normalization, 
Layer Normalization and Online Normalization. 
A linear sweep is done to set the learning rate (Table \ref{tbl:rnn_hyper_param_sweeps}-\ref{tbl:lstm_hyper_param_sweeps}). 
A logarithmic sweep is used to set the forward and backward decay 
factors $\alpha_f$ and $\alpha_b$ (Table \ref{tbl:rnn_hyper_param_sweeps}-\ref{tbl:lstm_hyper_param_sweeps}).
All hyperparameters are summarized in Table~\ref{tbl:rnn_hyper_params}.

\begin{table}[h!]
\centering
\captionof{table}{Recurrent Network Training Hyperparameters.}
\begin{tabular}{l|ccc|ccc}
\toprule
Recurrent Unit Type                  & \multicolumn{3}{c|}{RNN}                        & \multicolumn{3}{c}{LSTM}                    \\ \midrule
Normalization type                   & -              & LN             & ON            & -             & LN             & ON         \\
Learning rate ($\eta$)               & 0.5            & 0.95           & 1.7           & 3.5           & 3.25           & 6.5        \\
Embedding size                       & \multicolumn{3}{c|}{200}                        & \multicolumn{3}{c}{200}                     \\
Hidden state size                    & \multicolumn{3}{c|}{200}                        & \multicolumn{3}{c}{200}                     \\
Epochs                               & \multicolumn{3}{c|}{40}                         & \multicolumn{3}{c}{25}                      \\
Batch size                           & \multicolumn{3}{c|}{20}                         & \multicolumn{3}{c}{20}                      \\
$L_2$ constant ($\lambda$)              & \multicolumn{3}{c|}{$10^{-6}$}                  & \multicolumn{3}{c}{$10^{-6}$}               \\ \midrule
Forward decay factor ($\alphaf$)  & \multicolumn{3}{c|}{$\nicefrac{16383}{16384}$}  & \multicolumn{3}{c}{$\nicefrac{8191}{8192}$} \\
Backward decay factor ($\alphab$) & \multicolumn{3}{c|}{$\nicefrac{127}{128}$}      & \multicolumn{3}{c}{$\nicefrac{31}{32}$}     \\ \bottomrule
\end{tabular}
\label{tbl:rnn_hyper_params}
\end{table}

\begin{table}[h!]
\centering
\captionof{table}{RNN Network Hyperparameter Sweeps.}
\begin{tabular}{l|ccc}
\toprule
Normalization type                   & -              & LN             & ON                                   \\
Learning rate ($\eta$)               & 0.5            & 0.95           & 1.7                                  \\
$\eta$ sweep range                   & 0.05 to 0.7    & 0.05 to 2      & 0.05 to 2                            \\
$\eta$ sweep step size               & 0.075          & 0.05           & 0.075                                \\ \midrule
Sweep range for $\alphaf$            & \multicolumn{3}{c}{$\nicefrac{511}{512}$ to $\nicefrac{32767}{32768}$} \\
Sweep range for $\alphab$            & \multicolumn{3}{c}{$\nicefrac{3}{4}$ to $\nicefrac{4095}{4096}$}       \\ \bottomrule
\end{tabular}
\label{tbl:rnn_hyper_param_sweeps}
\end{table}

\begin{table}[h!]
\centering
\captionof{table}{LSTM Network Hyperparameter Sweeps.}
\begin{tabular}{l|ccc}
\toprule
Normalization type                   & -             & LN             & ON                                    \\
Learning rate ($\eta$)               & 3.5           & 3.25           & 6.5                                   \\
$\eta$ sweep range                   & 2.5 to 10     & 1.25 to 5.75   & 1 to 10                               \\
$\eta$ sweep step size               & 0.5           & 1              & 0.5                                   \\ \midrule
Sweep range for $\alphaf$            & \multicolumn{3}{c}{$\nicefrac{511}{512}$ to $\nicefrac{32767}{32768}$} \\
Sweep range for $\alphab$            & \multicolumn{3}{c}{$\nicefrac{3}{4}$ to $\nicefrac{4095}{4096}$}       \\ \bottomrule
\end{tabular}
\label{tbl:lstm_hyper_param_sweeps}
\end{table}

\subsection{Gradient bias experiment}
\label{apdx:grad_bias_plt}

We used a simple network to quantify 
gradient bias for Batch Normalization (Section~\ref{subsec:bias}, Figure~\ref{fig:bn_bias}). 
The weights are held fixed to decouple learning rate changes from the bias. 
In our setup a single convolution layer with a normalizer is followed by ReLU feeding 
into a fully connected layer and softmax (Figure~\ref{fig:grad_bias_plt}). We used the entire CIFAR-10 dataset 
to compute the ground truth gradient and compared it to the gradient resulting from batched computations 
using batch sizes in powers of two.  The error shown represents the angle in degrees derived from cosine similarity 
of resulting gradients and the ground truth averaged over ten runs. 

\begin{figure}[!th]
    \centering
    \includegraphics[width=.9\textwidth]{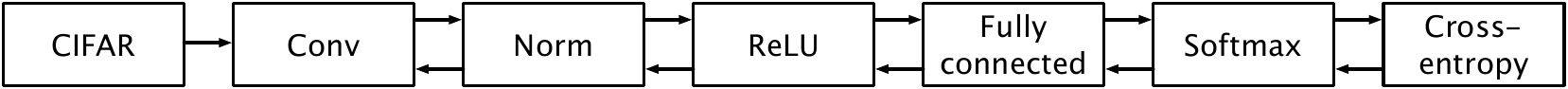}
    \caption{Network used to quantify gradient bias.}
    \label{fig:grad_bias_plt}
\end{figure}

\subsection{Statistical Characterization of Experiment Reproducibility}
\label{apdx:resnet_exp_stats}

The numerical values reported in Section~\ref{sec:experiments} are median
values for a set of runs.
Figure~\ref{fig:boxplt} is a set of box-plots which statistically characterize
the reproducibility of the experiments.
Experiments with a single run are depicted using dashed lines.
The run-to-run variability using Online Normalization is
comparable to that of other normalizers.

\begin{figure}[h!]
\centering
\begin{minipage}[b]{\textwidth}
\centering
\includegraphics[width=\linewidth]{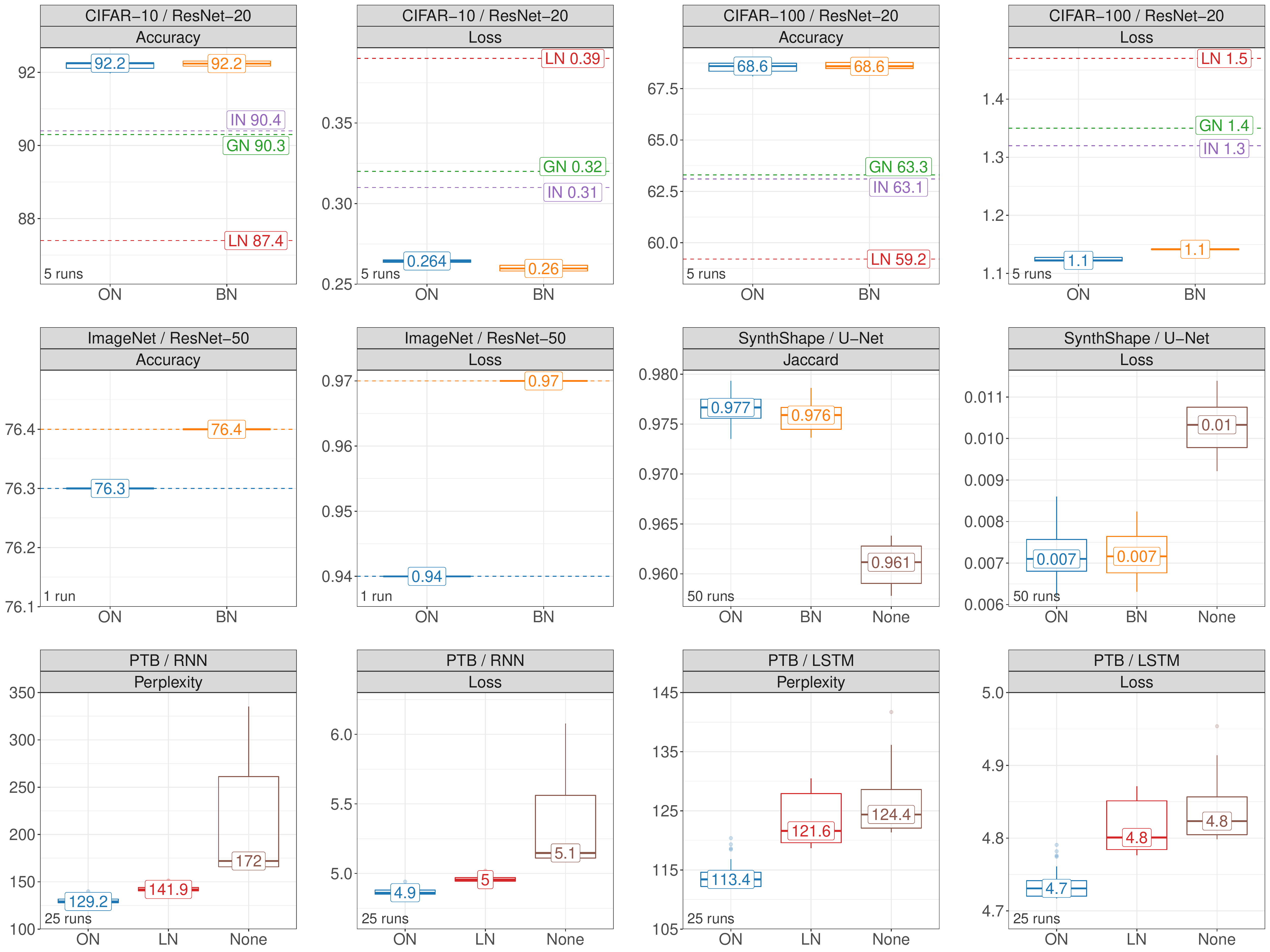}
\end{minipage} \\
\vspace{2mm}
\begin{minipage}[b]{\textwidth}
\centering
\includegraphics[width=.5\linewidth]{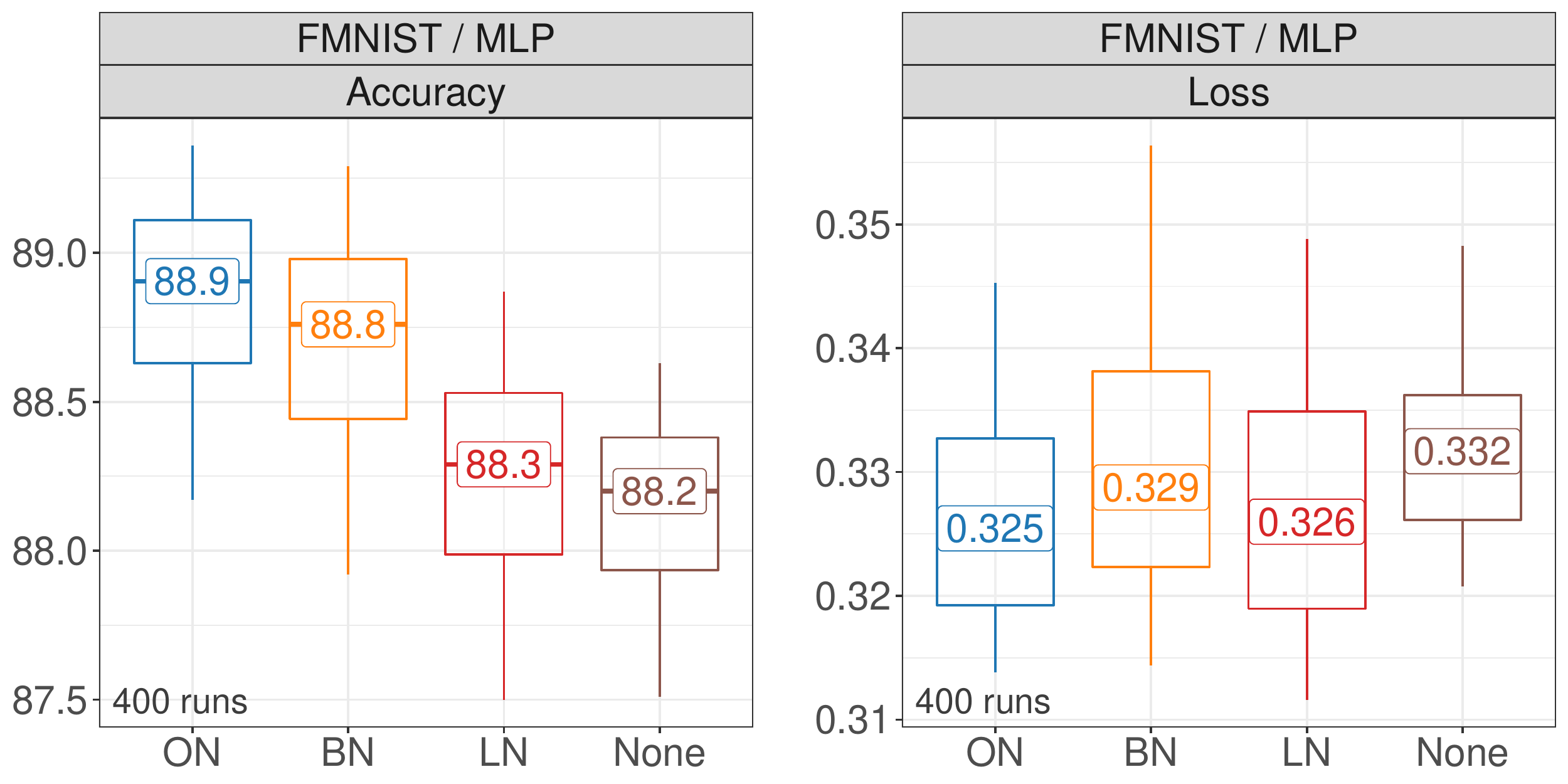}
\end{minipage}
\captionof{figure}{Reproducibility.}
\label{fig:boxplt}
\end{figure}

The sensitivity of Online Normalization to decay rates when training ResNet20
on CIFAR10 is shown in Figure~\ref{fig:c10r20hyp}.
For this fine-grained logarithmic sweep,
the decay rates are expressed as the horizon of averaging $h = 1/(1-\alpha)$.
It shows that Online Normalization not highly sensitive to the chosen decay
rate since the region of near-optimal performance is broad.
This allows for coarser sweeps when generalizing the technique to different
models and datasets.

\begin{figure}[h!]
    \centering
    \includegraphics[trim={0 15mm 0 15mm}, clip, width=.8\linewidth]{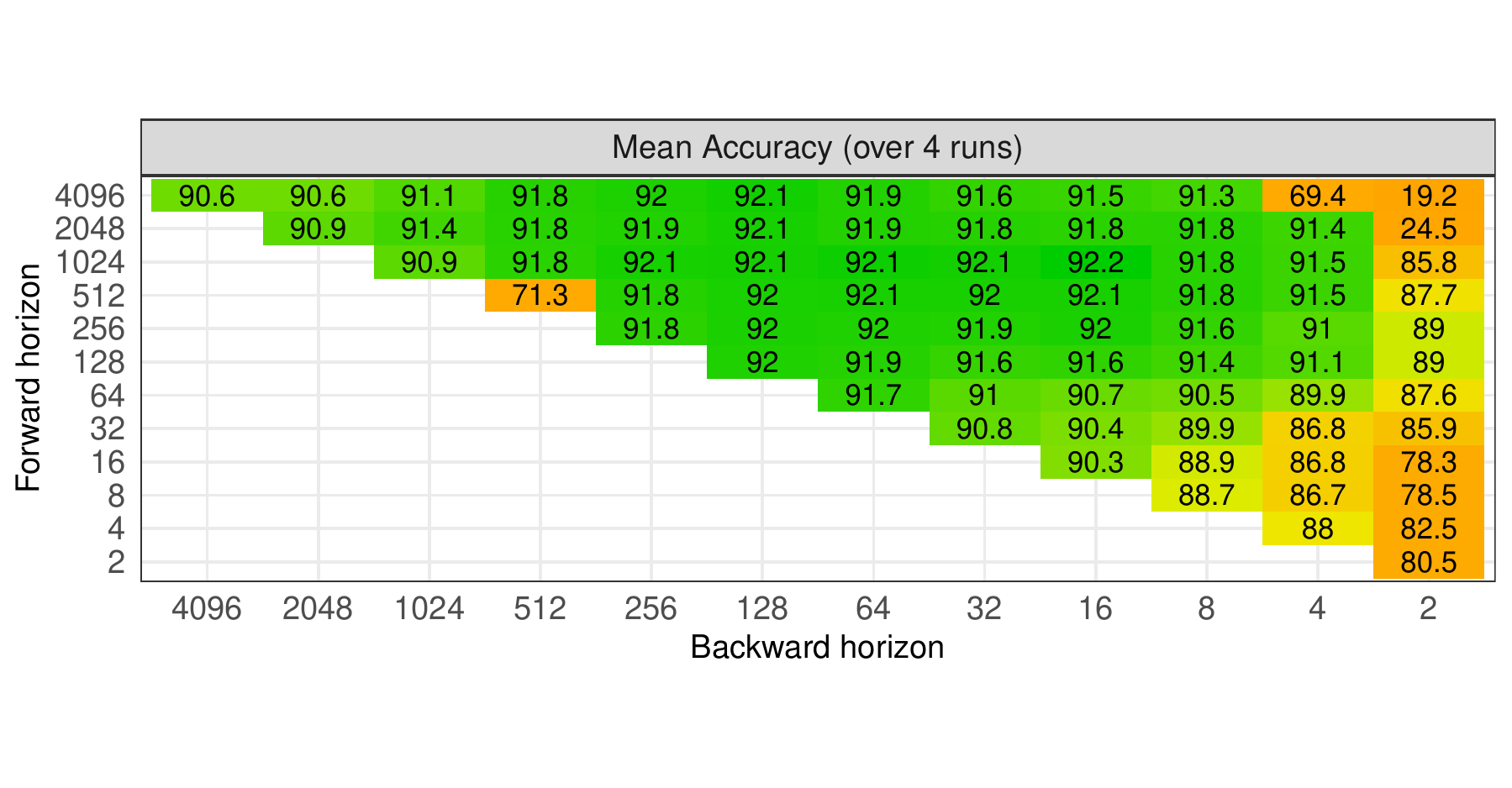}
    \captionof{figure}{Hyperparameter sweep.}
    \label{fig:c10r20hyp}
\end{figure}

\section{Gradient properties}
\label{apdx:gradient} 

The main part of the paper proved the expression of the gradient via projections (\ref{eq:grad0}) 
based on geometric considerations (Section \ref{subsec:gradient_properties}).  
It is also possible to derive this property without geometry.  Here is 
an alternative algebraic proof. 

\begin{claim}
In finite-dimensional spaces the backpropagation of the gradient of normalization (\ref{eq:norm0}) 
can be represented as a composition of two orthogonal projections: 
$\dx = \frac{1}{\sigma}  \left(\mathrm{I} -   \mathrm{P}_{\vec{1}} \right) \left(\mathrm{I} -   \mathrm{P}_{\hy} \right) \dy$. 
\end{claim}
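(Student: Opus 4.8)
The plan is to differentiate the finite-dimensional normalizer directly and then recognize the resulting linear map as the claimed product of projections. First I would make the operator explicit on $\R^N$: the statistics become $\mu = \tfrac{1}{N}\inp{\vec1}{\xvec}$ and $\sigma^2 = \tfrac{1}{N}\sum_k (x_k - \mu)^2$, with $j$-th output $y_j = (x_j - \mu)/\sigma$. Two elementary derivatives drive the rest: $\partial\mu/\partial x_i = \tfrac1N$, and differentiating the variance while using $\sum_k(x_k-\mu)=0$ gives $\partial\sigma/\partial x_i = (x_i-\mu)/(N\sigma) = y_i/N$.

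Next I would assemble the Jacobian by the quotient rule and substitute $x_j-\mu=\sigma y_j$, obtaining
\begin{equation}
\frac{\partial y_j}{\partial x_i} = \frac{1}{\sigma}\paren{\delta_{ij} - \frac{1}{N} - \frac{y_i y_j}{N}}.
\end{equation}
This matrix is symmetric, so the backpropagation $\dxnovec_i = \sum_j (\partial y_j/\partial x_i)\,\dynovec_j$ amounts to applying the same operator to $\dy$. Summing over $j$ yields $\dx = \tfrac1\sigma\paren{\dy - \tfrac1N\inp{\vec1}{\dy}\vec1 - \tfrac1N\inp{\yvec}{\dy}\yvec}$. The second term is precisely $\projop{\vec1}\dy$, and, since $\|\yvec\|^2=N$ because $\yvec$ sits on $\SN$, the third is $\projop{\yvec}\dy$, giving $\dx = \tfrac1\sigma(\I - \projop{\vec1} - \projop{\yvec})\dy$.

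The decisive step is to factor this sum into $(\I - \projop{\vec1})(\I - \projop{\yvec})$. Expanding the product produces an extra cross term $\projop{\vec1}\projop{\yvec}$, so I must argue it vanishes; this is where the geometry re-enters algebraically. Because $\sum_k y_k = \tfrac1\sigma\sum_k(x_k-\mu)=0$, the output obeys $\inp{\vec1}{\yvec}=0$, i.e. $\yvec\in\PS1$, so the two projection directions are orthogonal and $\projop{\vec1}\projop{\yvec}=0$. The sum and product forms therefore coincide, establishing the claim. I expect this zero-mean property of $\yvec$ to be the crux: the upstream differentiation is routine, but it is the constraint $\yvec\in\M=\PS1\cap\SN$ that annihilates the cross term and collapses the naive sum of projections into the stated composition.
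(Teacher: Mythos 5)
Your proposal is correct and follows essentially the same route as the paper's algebraic proof in the appendix: compute $\partial\mu/\partial x_i$ and $\partial\sigma/\partial x_i = y_i/N$, assemble the Jacobian, identify the two subtracted terms as $\projop{\vec1}$ and $\projop{\yvec}$ using $\|\yvec\|^2 = N$, and collapse the sum into the product via $\inp{\vec1}{\yvec}=0$. The only cosmetic difference is that you invoke the symmetry of the Jacobian to apply it directly to $\dy$, where the paper writes out the chain-rule sum explicitly.
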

\begin{proof}
In the $N$-dimensional space transformation (\ref{eq:norm0}) becomes
\begin{equation}
\begin{aligned}
\mu &= \frac{1}{N} \sum_i \hxnovec_i \\ 
\sigma^2 &= \frac{1}{N} \sum_i \left( \hxnovec_i  - \mu \right)^2  \\
\hynovec_i &= \frac{\hxnovec_i-\mu}{\sigma} \;. 
\end{aligned}
\end{equation} 
The derivatives of the mean and variance with respect to the $\hxnovec_j$ are: 
\begin{equation}
\begin{aligned}
\frac{\partial{\mu}}{\partial{\hxnovec_j}} &= \frac{1}{N}
\end{aligned}
\end{equation} 
\begin{equation}
\begin{aligned}
\frac{\partial{\sigma}}{\partial{\hxnovec_j}} &= \frac{1}{2 \sigma N} \sum_i \left[ 2 \left(\hxnovec_i - \mu\right)\left(\delta_{ij}-\frac{1}{N}\right) \right] \\
  &= \frac{1}{N \sigma} \sum_i \left[ (\hxnovec_i - \mu)\delta_{ij} \right]  - \frac{1}{N^2 \sigma} \sum_i (\hxnovec_i - \mu) \\
    &= \frac{\hxnovec_j - \mu}{N \sigma} - 0 \\
    &=  \frac{\hynovec_j}{N} \; , 
\end{aligned}
\end{equation} 
where $\delta_{ij}$ is the Kronecker delta function. 
The components of the Jacobian satisfy 
\begin{equation}
\begin{aligned}
\mathrm{J}_{ij} &\equiv \frac{\partial{\hynovec_i}}{\partial{\hxnovec_j}} = 
              \frac{(\delta_{ij}-\frac{\partial{\mu}}{\partial{\hxnovec_j}})\sigma - (\hxnovec_i-\mu)\frac{\partial{\sigma}}{\partial{\hxnovec_j}}}{\sigma^2} \\
  &= \frac{(\delta_{ij}-\frac{1}{N}) - \hynovec_i \frac{\partial{\sigma}}{\partial{\hxnovec_j}}}{\sigma} \\ 
  &= \frac{(\delta_{ij}-\frac{1}{N}) - \frac{\hynovec_i \hynovec_j}{N}}{\sigma} \\ 
  &=  \frac{(N\delta_{ij}-1) - \hynovec_i \hynovec_j}{N\sigma}  \; . 
\end{aligned}
\end{equation} 
The $j$-th component of the gradient passing through normalization is 
\begin{equation}
\begin{aligned}
\dxnovec_j &= \frac{\partial L}{\partial \hxnovec_j} \\
  &= \sum_i \frac{\partial L}{\partial \hynovec_i} \frac{\partial \hynovec_i}{\partial \hxnovec_j} \\
  &= \frac{\sum_i \left( \dynovec_i \left[ (N\delta_{ij}-1) -  \hynovec_i \hynovec_j \right] \right)}{N\sigma}  \\
  &= \frac{N \dynovec_j - \sum_i \dynovec_i  - \hynovec_j  \sum_i (\dynovec_i  \hynovec_i ) }{N\sigma}  \\
  &= \frac{\dynovec_j}{\sigma} - \frac{ \sum_i \dynovec_i }{N\sigma} - \frac{\hynovec_j \sum_i (\dynovec_i  \hynovec_i)}{N\sigma} \\
  &= \frac{1}{\sigma} \left[ \dynovec_j - \frac{ \sum_i \dynovec_i }{N} - \frac{\hynovec_j \sum_i (\dynovec_i  \hynovec_i)}{N} \right] 
\end{aligned}
\end{equation} 
and
\begin{equation}
\begin{aligned}
\label{eq:grad2}
\dx &= \frac{1}{\sigma}\left[ \dy-   \frac{(\dy, \vec{1})}{N} \vec{1} -  \frac{{(\dy, \hy)}}{N}\hy \right]  \; , 
\end{aligned}
\end{equation} 
where $(\cdot,\cdot)$ is the inner product in $N$ dimensions. 

Because $\|\vec{1}\|^2=N$ and
\begin{equation}
\begin{aligned}
\| \hy \|^2 &= \sum_i \hynovec_i^2 \\
 &= \sum_i \frac{N\left( x_i - \mu \right)^2}{\sum_j \left(x_j - \mu \right)^2} \\
  &= N \;, 
\end{aligned}
\end{equation} 
we can express (\ref{eq:grad2}) in terms of the projections 
\begin{equation}
\begin{aligned}
\dx &= \frac{1}{\sigma}\left[ \dy-   \frac{(\dy, \vec{1})}{(\vec{1}, \vec{1})} \vec{1} -  \frac{{(\dy, \hy)}}{(\hy, \hy)}\hy \right]  \\
  &= \frac{1}{\sigma}  \left(\mathrm{I} -   \mathrm{P}_{\vec{1}} -  \mathrm{P}_{\hy} \right) \dy \; . 
\end{aligned}
\end{equation} 
From this expression and because $\hy$ is orthogonal to $\vec{1}$, 
we can see that resulting gradient $\dx$ is orthogonal to both $\vec{1}$ and  $\hy$. 

Orthogonality of $\hy$ and $\vec{1}$ also implies that $\mathrm{P}_{\vec{1}} \mathrm{P}_{\hy} = 0$ and therefore 
\begin{equation}
\begin{aligned}
\dx &= \frac{1}{\sigma}  \left(\mathrm{I} -   \mathrm{P}_{\vec{1}} -  \mathrm{P}_{\hy} + \mathrm{P}_{\vec{1}} \mathrm{P}_{\hy}  \right) \dy \\ 
 &= \frac{1}{\sigma}  \left(\mathrm{I} -   \mathrm{P}_{\vec{1}} \right) \left(\mathrm{I} -   \mathrm{P}_{\hy} \right) \dy \; .
\end{aligned}
\end{equation} 
\end{proof}

This proves equation (\ref{eq:grad0}) algebraically. Note that orthogonality conditions (\ref{eq:orthogonal}) follow from this representation.

\section{Weights and gradients equilibrium conditions}
\label{apdx:equilibrium}

For the weight update shown in Figure~\ref{fig:equilibrium_theory} we have 

\begin{equation}
\begin{aligned}
\label{eq:equilibrium1}
|w|^2 - \left(\eta \mathbb{E}|w'|\right)^2 &= \left(|w| - \eta\lambda|w|\right)^2 \\
  & = |w|^2 - 2\eta \lambda |w|^2 + \eta^2 \lambda^2 |w|^2 
\end{aligned}
\end{equation} 
\begin{equation}
\begin{aligned}
\label{eq:equilibrium2}
(\eta \mathbb{E}(|w'|))^2 &= (2 - \eta\lambda)\eta\lambda|w|^2 \\
 & \approx 2\eta\lambda |w|^2 \; . 
\end{aligned}
\end{equation} 
Solving for equilibrium norm of the weights $|w|$ we get 
\begin{equation}
\begin{aligned}
\label{eq:equilibrium3}
|w| =\sqrt{\frac{\eta}{2\lambda}} \mathbb{E}|w'|  
\end{aligned}
\end{equation}  
and correspondingly 
\begin{equation}
\begin{aligned}
\label{eq:equilibrium4}
\frac{\Delta w}{|w|} & = \frac{\eta w'}{\sqrt{\frac{\eta}{2\lambda}} \mathbb{E}|w'| }  \\
 & = \sqrt{2\eta\lambda} \, \frac{w'}{\E{|w'|}} 
\end{aligned}
\end{equation} 
matching equations (\ref{eq:equilibrium_a}) and (\ref{eq:norm_w}). 
 

\section{Properties of Online Normalization} 
\label{app:properties}

In this section we prove the properties of Online Normalization presented in Section \ref{sec:online}.  
We focus on per-feature normalization in steps (\ref{eq:sn_fwd}) and (\ref{eq:on_bwd2}) 
and do not discuss layer scaling steps (\ref{eq:l_fwd}) and (\ref{eq:on_bwd1}). 


For simplicity in subsequent derivations we only consider the case of scalar samples.  
A generalization to multi-scalar samples is straightforward but clutters the equations. 
Under this simplification the forward process (\ref{eq:sn_fwd}) can be rewritten as 
\begin{subequations}
\label{eq:sn_fwd1}
\begin{align}
y_t &= \frac{x_t - \mu_{t-1}}{\sigma_{t-1}} \label{eq:on_y1} \\
\mu_t &= \alpha \mu_{t-1} + (1-\alpha) x_t \label{eq:on_mu1}\\
\sigma_t^2 &= \alpha \sigma_{t-1}^2 + \alpha (1 - \alpha) \left( x_t - \mu_{t-1} \right)^2  \label{eq:on_sigma1} \; . 
\end{align}
\end{subequations} 

This process is a standard way to compute mean 
and variance of the incoming sequence $x$ via exponentially decaying averaging:
\begin{equation}
\begin{aligned}
\label{eq:x_mean}
\mu_t = (1 - \alpha) \sum_{j=0}^t \alpha^{t-j} x_j 
\end{aligned} 
\end{equation} 
\begin{equation}
\begin{aligned}
\label{eq:x_variance}
\sigma_t = (1 - \alpha)\sum_{j=0}^t \alpha^{t-j} ( x_j - \mu_t)^2 \quad \; . 
\end{aligned} 
\end{equation}

We start with an observation that the computation of the mean in  (\ref{eq:sn_fwd1}) can 
be equivalently performed as a control process: 
\begin{claim}
Control process 
\label{claim:equiv1}
\begin{equation}
\begin{aligned}
\label{eq:mom_equv_1a}
\hat{y}_t &= x_t - (1-\alpha)\varepsilon_{t-1} \\
\varepsilon_t &= \varepsilon_{t-1} + \hat{y}_t .
\end{aligned} 
\end{equation} 
is equivalent to estimator process  (\ref{eq:on_mu1})
\begin{equation}
\begin{aligned}
\label{eq:mom_equv_1b}
\hat{y}_t &= x_t - \mu_{t-1} \\
\mu_t &= \alpha \mu_{t-1} + (1 - \alpha) x_t
\end{aligned} 
\end{equation} 
with the accumulated control error $\varepsilon_t$ proportional to the running mean $\mu_t$
\begin{equation}
\begin{aligned}
\label{eq:mom_equv_1c}
\mu_t = (1-\alpha) \varepsilon_t \; . 
\end{aligned} 
\end{equation} 
\end{claim}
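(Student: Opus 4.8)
The plan is to prove both assertions of the claim simultaneously by establishing the single invariant $\mu_t = (1-\alpha)\varepsilon_t$ for all $t$ by induction; the equivalence of the two forward outputs $\hat{y}_t$ then drops out as an immediate byproduct. I would assume the two processes are initialized consistently, with $\mu_{-1} = \varepsilon_{-1} = 0$, which is forced by the closed form (\ref{eq:x_mean}); the invariant then holds trivially at the base step, since $(1-\alpha)\varepsilon_{-1} = 0 = \mu_{-1}$.

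For the inductive step, suppose $\mu_{t-1} = (1-\alpha)\varepsilon_{t-1}$. Substituting this directly into the control forward update gives
\begin{equation}
\hat{y}_t = x_t - (1-\alpha)\varepsilon_{t-1} = x_t - \mu_{t-1},
\end{equation}
which is exactly the forward output of the estimator process (\ref{eq:mom_equv_1b}), so the two processes agree on $\hat{y}_t$ whenever the invariant holds at step $t-1$. It then remains only to carry the invariant forward one step. Multiplying the accumulator update by $(1-\alpha)$ and using both the induction hypothesis and the expression for $\hat{y}_t$ just obtained,
\begin{equation}
(1-\alpha)\varepsilon_t = (1-\alpha)\varepsilon_{t-1} + (1-\alpha)\hat{y}_t = \mu_{t-1} + (1-\alpha)(x_t - \mu_{t-1}) = \alpha\mu_{t-1} + (1-\alpha)x_t = \mu_t,
\end{equation}
where the last equality is precisely the mean update (\ref{eq:on_mu1}). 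This closes the induction and yields $\mu_t = (1-\alpha)\varepsilon_t$ together with the equality of outputs.

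The hard part here is essentially nonexistent: the argument is a one-line linear-recurrence identity, and the only point requiring genuine care is the initialization, because the proportionality constant $(1-\alpha)$ is pinned down by demanding that the accumulator $\varepsilon_t$ — which merely sums the centered increments $\hat{y}_t$ — reproduce the exponentially decaying mean (\ref{eq:x_mean}) rather than an unweighted running sum. The conceptual takeaway I would emphasize is that accumulating the centered residuals and rescaling by $(1-\alpha)$ is identical to the leaky-integrator mean update; this same substitute-and-rescale pattern is exactly what later drives the backward control processes (\ref{eq:on_bwd2}) and (\ref{eq:on_bwd3}).
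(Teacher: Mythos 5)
Your proof is correct and follows essentially the same route as the paper's: the inductive step is precisely the paper's chain of equalities $(1-\alpha)\varepsilon_t = (1-\alpha)(\varepsilon_{t-1}+\hat{y}_t) = \mu_{t-1} + (1-\alpha)(x_t - \mu_{t-1}) = \alpha\mu_{t-1} + (1-\alpha)x_t$. The only difference is that you make the induction and the zero initialization explicit, which the paper leaves implicit; this is a minor improvement in rigor, not a different argument.
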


\begin{proof}
The equivalence of the first lines is obvious.  From (\ref{eq:mom_equv_1a}) and (\ref{eq:mom_equv_1c}) we also have 
\begin{equation}
\begin{aligned}
\mu_t &= (1-\alpha)\varepsilon_t\\
 &= (1-\alpha)(\varepsilon_{t-1} + \hat{y}_t) \\
  &= \mu_{t-1} + (1-\alpha)(x_t - (1-\alpha) \varepsilon_{t-1}) \\
  &= \mu_{t-1} + (1-\alpha)(x_t - \mu_{t-1}) \\
  &= \alpha \mu_{t-1} +(1-\alpha) x_t \; , 
\end{aligned} 
\end{equation} 
which matches (\ref{eq:mom_equv_1b}). 
\end{proof}

To proceed we make an assumption that the input to the normalizer is bounded: 
\begin{assumption}
\label{assumption:bounded_input}
We assume that inputs $x$ are bounded: $|x_t| < C_x \quad  \forall t$. 
\end{assumption}

\begin{claim}
\label{claim:sum1}
Under this assumption, the accumulated output of process  (\ref{eq:mom_equv_1b}) 
is uniformly bounded by 
\begin{equation}
\begin{aligned}
\label{eq:eps_bound1}
\left| \sum_{j=0}^t \hat{y}_j \right| < \frac{1}{1-\alpha} C_x \quad \forall t \; .
\end{aligned} 
\end{equation} 
\end{claim}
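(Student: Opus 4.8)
The plan is to leverage the control-process equivalence established in Claim~\ref{claim:equiv1}, which lets me rewrite the accumulated output as the running mean, and then bound that mean directly. First I would observe that, with the natural initialization $\varepsilon_{-1}=0$, the recursion $\varepsilon_t = \varepsilon_{t-1}+\hat{y}_t$ telescopes to
\begin{equation}
\sum_{j=0}^{t}\hat{y}_j = \varepsilon_t \; .
\end{equation}
Combining this with relation (\ref{eq:mom_equv_1c}), namely $\mu_t = (1-\alpha)\varepsilon_t$, immediately identifies the quantity of interest as a rescaling of the running mean:
\begin{equation}
\sum_{j=0}^{t}\hat{y}_j = \frac{\mu_t}{1-\alpha} \; .
\end{equation}
The problem therefore reduces to showing $|\mu_t| < C_x$ for all $t$.

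The second step is to bound the running mean. I would prove $|\mu_t| < C_x$ by induction on $t$ using the recurrence (\ref{eq:on_mu1}), $\mu_t = \alpha\mu_{t-1} + (1-\alpha)x_t$. With $\mu_{-1}=0$, the base case $|\mu_0| = (1-\alpha)|x_0| < C_x$ follows from Assumption~\ref{assumption:bounded_input} together with $0<\alpha<1$. For the inductive step, the recurrence expresses $\mu_t$ as a convex combination of $\mu_{t-1}$ and $x_t$, so the triangle inequality gives $|\mu_t| \le \alpha|\mu_{t-1}| + (1-\alpha)|x_t| < \alpha C_x + (1-\alpha)C_x = C_x$. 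Alternatively, one may invoke the explicit closed form (\ref{eq:x_mean}), $\mu_t = (1-\alpha)\sum_{j=0}^t \alpha^{t-j}x_j$, and bound it using the geometric series $\sum_{k=0}^t \alpha^k < \frac{1}{1-\alpha}$; both routes yield $|\mu_t|<C_x$.

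Substituting $|\mu_t| < C_x$ into the identity from the first step gives $\left|\sum_{j=0}^t \hat{y}_j\right| < \frac{C_x}{1-\alpha}$, as claimed. I do not expect a genuine obstacle here. The only subtlety is the bookkeeping of the initial condition, so that the telescoping sum equals $\varepsilon_t$ exactly, together with the recognition that the running mean is a weighted average of bounded inputs and hence inherits their bound. Both the inductive and the closed-form arguments are routine once the accumulated output has been identified with $\mu_t/(1-\alpha)$ via Claim~\ref{claim:equiv1}.
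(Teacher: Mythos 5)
Your proposal is correct and follows essentially the same route as the paper: telescope the control recursion to identify $\sum_{j=0}^t \hat{y}_j$ with $\varepsilon_t = \mu_t/(1-\alpha)$, then bound $|\mu_t|$ by $C_x$ via the exponentially weighted average of the bounded inputs (the paper uses the closed form \eqref{eq:x_mean} and the geometric series, which is your second route; your inductive convex-combination argument is an equivalent rephrasing). No gap.
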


\begin{proof}
Second line of (\ref{eq:mom_equv_1a}) implies that 
\begin{equation}
\begin{aligned}
\sum_{j=0}^t \hat{y}_j = \varepsilon_t \;. 
\end{aligned} 
\end{equation} 

From representation (\ref{eq:x_mean}) and equality (\ref{eq:mom_equv_1c}) we have 
\begin{equation}
\begin{aligned} 
\label{eq:y_bound}
\left| \sum_{j=0}^t \hat{y}_j \right| &= |\varepsilon_t|  \\
 &= \frac{|\mu_t|}{1-\alpha} \\
 &= \left| \sum_{j=0}^t  \alpha^{t-j} x_{j}  \right| \\
 & < C_x  \sum_{j=0}^\infty  \alpha^j \\
  &= \frac{C_x}{1-\alpha} \; . 
\end{aligned} 
\end{equation} 
\end{proof}

Process (\ref{eq:sn_fwd1}) is identical to process (\ref{eq:mom_equv_1b}) except scaling with $\sigma$  
\begin{equation}
\begin{aligned} 
\label{eq:y_equiv}
y_t = \frac{\hat{y}_t}{\sigma_{t-1}}  \; . 
\end{aligned} 
\end{equation} 

To extend the result of Claim \ref{claim:sum1} to (\ref{eq:sn_fwd1}) we assume that there is nonzero 
variability in the input. 
\begin{assumption}
\label{assumption:bounded_variance} 
Variance of the input stream $x$ computed via exponentially decaying averaging 
(\ref{eq:on_sigma1}, \ref{eq:x_variance}) is uniformly bounded away from zero after initial $N$ steps: 
\begin{equation}
\begin{aligned}
\label{eq:equiv_sigma2}
\sigma_t^2 > C_{\sigma}^2 > 0   \quad \forall t \geq N \;. 
\end{aligned} 
\end{equation} 
\end{assumption}

Note that this assumption only requires that there is sufficient variability in the input for successful normalization.  
The first $N$ steps correspond to the warmup of the process when the approximated statistics may 
experience high variability. 

\begin{claim} 
Arbitrarily long accumulated sum of output of the process  (\ref{eq:sn_fwd1}) starting with 
time step N is uniformly bounded by 
\begin{equation}
\begin{aligned} 
\left| \sum_{j=N+1}^t y_j \right| < \frac{1}{1-\alpha} \frac{2C_x}{C_{\sigma}} \quad \forall t \; .
\end{aligned} 
\end{equation} 
\end{claim}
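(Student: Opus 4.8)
The plan is to transfer the bound on the accumulated mean-control output from Claim~\ref{claim:sum1} to the $\sigma$-scaled output using the relation $y_t = \hat{y}_t/\sigma_{t-1}$ from (\ref{eq:y_equiv}), together with the uniform lower bound $\sigma_t > C_\sigma$ supplied by Assumption~\ref{assumption:bounded_variance} for $t \ge N$. The single clean ingredient is that the innovations telescope: by the control-process form of Claim~\ref{claim:equiv1} one has $\hat{y}_j = \varepsilon_j - \varepsilon_{j-1}$, so the unscaled partial sum over the window collapses to $\sum_{j=N+1}^t \hat{y}_j = \varepsilon_t - \varepsilon_N$, and the triangle inequality with (\ref{eq:eps_bound1}) gives $|\varepsilon_t - \varepsilon_N| < 2C_x/(1-\alpha)$. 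This already explains both the factor $2$ and the $1/(1-\alpha)$ in the target bound.

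First I would dispose of the idealized case in which $\sigma_{j-1}$ is constant across the window: there $\sum_{j=N+1}^t y_j = (\varepsilon_t - \varepsilon_N)/\sigma$, and dividing the displayed estimate by $\sigma > C_\sigma$ yields exactly $\tfrac{1}{1-\alpha}\tfrac{2C_x}{C_\sigma}$. To remove the constant-$\sigma$ assumption I would apply summation by parts with factors $a_j = 1/\sigma_{j-1}$ and $b_j = \varepsilon_j$, obtaining
\begin{equation}
\begin{aligned}
\sum_{j=N+1}^t y_j = \frac{\varepsilon_t}{\sigma_{t-1}} - \frac{\varepsilon_N}{\sigma_N} - \sum_{j=N+1}^{t-1}\varepsilon_j\left(\frac{1}{\sigma_j}-\frac{1}{\sigma_{j-1}}\right) \; .
\end{aligned}
\end{equation}
The two boundary terms are each bounded by $C_x/\!\left((1-\alpha)C_\sigma\right)$ using $|\varepsilon_j| < C_x/(1-\alpha)$ and $\sigma_{j-1} > C_\sigma$, and their sum is precisely the claimed bound.

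The main obstacle is the interior sum $\sum_j \varepsilon_j\!\left(1/\sigma_j - 1/\sigma_{j-1}\right)$, which a crude absolute-value bound does not control uniformly in $t$, since the total variation of $1/\sigma_{j-1}$ can in principle accumulate as the window lengthens. To close this I would exploit the slow variation of the variance estimate: from the recursion (\ref{eq:on_sigma1}) one has $\sigma_j^2 - \sigma_{j-1}^2 = (1-\alpha)\!\left(\alpha\,\hat{y}_j^2 - \sigma_{j-1}^2\right)$, so each increment is only an $O(1-\alpha)$ fraction of a bounded quantity; combined with the lower bound $C_\sigma$ and a matching upper bound on $\sigma_t$ (immediate from $|x_t|<C_x$ and (\ref{eq:on_sigma1})), each step satisfies $|1/\sigma_j - 1/\sigma_{j-1}| = O(1-\alpha)$. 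The delicate point, and the part I expect to require the most care, is arguing that these increments do not build up coherently against $\varepsilon_j$ over arbitrarily long horizons, so that the interior term stays within the slack left by the strict inequalities in the boundary bounds. I would attempt this either by grouping the sum over averaging horizons of length $\sim 1/(1-\alpha)$ and bounding each block, or by comparing $1/\sigma_{j-1}$ against a monotone envelope so that its variation telescopes, keeping the constant-$\sigma$ reduction above as the clean skeleton onto which this estimate is grafted.
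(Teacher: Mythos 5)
Your constant-$\sigma$ skeleton \emph{is} the paper's entire proof. The paper writes $y_j = \hat{y}_j/\sigma_{t-1}$ (a single subscript for the whole window), pulls the $1/\sigma$ factor outside the absolute value, bounds it by $1/C_\sigma$ using Assumption~\ref{assumption:bounded_variance}, splits $\sum_{j=N+1}^{t}\hat{y}_j$ as the difference of $\sum_{j=0}^{t}\hat{y}_j$ and $\sum_{j=0}^{N}\hat{y}_j$, and applies Claim~\ref{claim:sum1} to each piece to obtain $2C_x/\left((1-\alpha)C_\sigma\right)$. There is no summation by parts and no treatment of the variation of $\sigma_{j-1}$ anywhere in the paper's argument.

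The obstacle you flag is therefore not an artifact of your route; it is a gap in the paper's own proof. Since in fact $y_j=\hat{y}_j/\sigma_{j-1}$ with a $j$-dependent denominator, the step $\bigl|\sum_j \hat{y}_j/\sigma_{j-1}\bigr| < \frac{1}{C_\sigma}\bigl|\sum_j\hat{y}_j\bigr|$ is not valid in general: the $\hat{y}_j$ have mixed signs, and the cancellation that keeps $\sum_j\hat{y}_j=\varepsilon_t$ bounded can be destroyed by nonuniform weights. Your Abel decomposition isolates exactly the uncontrolled piece, $\sum_j\varepsilon_j\left(1/\sigma_j-1/\sigma_{j-1}\right)$, and your worry about it is well founded: each term is $O(1)$ (an $O(1-\alpha)$ increment of $1/\sigma_{j-1}$ times $|\varepsilon_j|<C_x/(1-\alpha)$), so a crude bound grows like $t$, and nothing in Assumptions~\ref{assumption:bounded_input} and \ref{assumption:bounded_variance} prevents the increments of $1/\sigma_{j-1}$ from correlating with the sign of $\varepsilon_j$ over long horizons. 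So your proposal is incomplete exactly where you say it is, but it correctly diagnoses that the claim as stated needs either an additional hypothesis (e.g.\ bounded total variation or eventual monotonicity of $\sigma_t$, or slowly varying input statistics) or a genuinely different argument; the paper's four-line proof does not supply one.
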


\begin{proof}
From the bound (\ref{eq:y_bound}) and equivalence (\ref{eq:y_equiv}) for any $t$ have 
\begin{equation}
\begin{aligned}
\left| \sum_{j=N+1}^t y_j \right| &= \left| \sum_{j=N+1}^t \frac{\hat{y}_j}{\sigma_{t-1}} \right| \\
 & < \frac{1}{C_\sigma} \left| \sum_{j=N+1}^t \hat{y}_j \right| \\
 & \leq \frac{1}{C_\sigma} \left( \left| \sum_{j=0}^N \hat{y}_j \right| + \left| \sum_{j=0}^t \hat{y}_j \right| \right) \\
 & <  \frac{1}{C_\sigma} \frac{2C_x}{1-\alpha} \; . 
\end{aligned} 
\end{equation} 
\end{proof}

This uniform bound implies that the average of the normalized stream $y_j$ generated by  (\ref{eq:sn_fwd1}) 
asymptotically approaches zero as the window of averaging increases. 
\begin{claim} 
 \label{claim:mean_zero} 
After initial N steps (Assumption \ref{assumption:bounded_variance}), 
the output $y$ generated by generated by (\ref{eq:sn_fwd1}) satisfies 
\begin{equation}
\begin{aligned} 
\lim_{t\to\infty}  \mu_t(y) \equiv \lim_{t\to\infty} \left(\frac{1}{t}\sum_{j=N+1}^{N+t} y_j \right) = 0 \; ,
\end{aligned} 
\end{equation} 
\end{claim}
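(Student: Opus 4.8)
The plan is to leverage the uniform bound on accumulated outputs established in the preceding claim, which guarantees that $\left|\sum_{j=N+1}^{t} y_j\right| < B$ holds for every upper limit $t$, with $B = \frac{2C_x}{(1-\alpha)C_\sigma}$ a constant independent of $t$. The quantity whose limit we seek is precisely a time-average of these same outputs, namely $\mu_t(y) = \frac{1}{t}\sum_{j=N+1}^{N+t} y_j$, so the entire argument reduces to relating the two.

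First I would identify the partial sum in the definition of $\mu_t(y)$ with the object controlled by the previous claim. Taking the upper index to be $N+t$, the uniform bound immediately gives $\left|\sum_{j=N+1}^{N+t} y_j\right| < B$ for all $t$, since that bound holds for an arbitrary choice of upper limit. Dividing through by $t$ then yields $|\mu_t(y)| < B/t$.

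Finally I would let $t\to\infty$. Because $B$ is a fixed constant while $t$ grows without bound, the right-hand side $B/t$ tends to zero, and hence $\mu_t(y)\to 0$ by a squeeze argument, which is exactly the assertion of the claim.

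I expect no real obstacle here, since all of the analytic content was front-loaded into the uniform boundedness of the telescoped accumulation $\varepsilon_t$, which in turn rested on the geometric summability controlled by $\alpha<1$ together with the variance floor $C_\sigma$ from Assumption~\ref{assumption:bounded_variance}. The present statement is merely the observation that a uniformly bounded accumulation, divided by a window length $t$ that grows without bound, necessarily vanishes at rate $O(1/t)$; the only point meriting care is keeping the bookkeeping of the summation limits consistent so that the earlier uniform bound applies verbatim to the window starting at $N+1$.
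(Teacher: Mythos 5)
Your proposal is correct and matches the paper's reasoning exactly: the paper states this claim as an immediate consequence of the preceding uniform bound on the accumulated sum (dividing a $t$-independent bound by the window length $t$ and letting $t\to\infty$), which is precisely your argument. You have simply written out the one-line squeeze step that the paper leaves implicit.
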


 We can construct a similar result for the variance of $y$. 
 \begin{claim} 
Output $y$ generated by  (\ref{eq:sn_fwd1}) satisfies 
\begin{equation}
\begin{aligned} 
\label{eq:mean_zero}
\lim_{t\to\infty}  \sigma^2_t(y) \equiv \lim_{t\to\infty} \left(\frac{1}{t}\sum_{j=N+1}^{N+t} \left( y_j - \mu_t(y) \right)^2 \right) = \frac{1}{\alpha} 
\end{aligned} 
\end{equation} 
\end{claim}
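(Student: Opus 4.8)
The plan is to first use the mean-zero result (Claim \ref{claim:mean_zero}) to collapse the empirical variance into a pure second moment, and then to extract the value $\frac{1}{\alpha}$ from the variance recursion \eqref{eq:on_sigma1} by telescoping. Since $\mu_t(y)\to 0$, expanding the square gives $\frac{1}{t}\sum_{j=N+1}^{N+t}(y_j-\mu_t(y))^2 = \frac{1}{t}\sum_{j=N+1}^{N+t}y_j^2 - \mu_t(y)^2$, and the last term vanishes in the limit. So it suffices to show $\frac{1}{t}\sum_{j=N+1}^{N+t}y_j^2 \to \frac{1}{\alpha}$.

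Next I would rewrite the variance update multiplicatively. Using $y_t=(x_t-\mu_{t-1})/\sigma_{t-1}$, equation \eqref{eq:on_sigma1} becomes $\sigma_t^2=\alpha\,\sigma_{t-1}^2\bigl(1+(1-\alpha)y_t^2\bigr)$. Taking logarithms and summing from $j=N+1$ to $N+t$ telescopes the $\log\sigma^2$ terms, leaving
\[ \log\sigma_{N+t}^2-\log\sigma_N^2 = t\log\alpha + \sum_{j=N+1}^{N+t}\log\bigl(1+(1-\alpha)y_j^2\bigr). \]
Here I would invoke boundedness: Assumption \ref{assumption:bounded_input} bounds the inputs and hence $\sigma_t^2$ from above through \eqref{eq:x_variance}, while Assumption \ref{assumption:bounded_variance} bounds $\sigma_t^2$ away from zero, so $\log\sigma_{N+t}^2$ stays bounded and the boundary term divided by $t$ vanishes. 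This yields the exact limit $\frac{1}{t}\sum_{j=N+1}^{N+t}\log\bigl(1+(1-\alpha)y_j^2\bigr)\to-\log\alpha$.

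The final step passes from this logarithmic average back to the arithmetic average $\frac{1}{t}\sum y_j^2$, and this is where I expect the real difficulty. Applying Jensen's inequality to the concave logarithm gives the clean one-sided bound $\log\bigl(1+(1-\alpha)\tfrac{1}{t}\sum y_j^2\bigr)\ge\tfrac{1}{t}\sum\log\bigl(1+(1-\alpha)y_j^2\bigr)\to-\log\alpha$, which rearranges to $\liminf_t\tfrac{1}{t}\sum y_j^2\ge\tfrac{1}{\alpha}$. The matching upper bound is the obstacle: it requires controlling the Jensen gap, equivalently the spread between the arithmetic and geometric means of the ratios $\sigma_j^2/\sigma_{j-1}^2$, since the multiplicative recursion only pins down the geometric mean directly whereas the target is the arithmetic mean.

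To close the gap I would use that $y_j^2$ is uniformly bounded (from Assumptions \ref{assumption:bounded_input} and \ref{assumption:bounded_variance}) so that each increment satisfies $(1-\alpha)y_j^2=O(1-\alpha)$; a second-order expansion $\log(1+u)=u-\tfrac{1}{2}u^2+\cdots$ then bounds the gap by a term of order $1-\alpha$, giving $\frac{1}{t}\sum y_j^2\to\frac{1}{\alpha}$ to leading order in the averaging horizon. I would flag this nonlinearity as the crux: the exact equality $\tfrac{1}{\alpha}$ holds cleanly in this leading-order sense, while promoting it to an exact statement for fixed $\alpha$ would need an additional concentration hypothesis forcing the empirical distribution of the normalized stream to be tight.
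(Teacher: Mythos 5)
Your route is genuinely different from the paper's and, on the decisive point, more careful. The paper substitutes $(x_j-\mu_{j-1})^2=(\sigma_j^2-\alpha\sigma_{j-1}^2)/(\alpha(1-\alpha))$ into $\tfrac{1}{t}\sum_j y_j^2$, splits off the constant $\tfrac{1}{\alpha}$, and then asserts that the leftover Ces\`aro average $\tfrac{1}{t}\sum_j\frac{\sigma_j^2-\sigma_{j-1}^2}{\alpha(1-\alpha)\sigma_{j-1}^2}$ vanishes, implicitly because it is ``almost telescoping'' over a sequence bounded above and away from zero. You instead write the recursion multiplicatively as $\sigma_t^2=\alpha\,\sigma_{t-1}^2\bigl(1+(1-\alpha)y_t^2\bigr)$, telescope the logarithms exactly, and use Jensen to pass back to the arithmetic mean. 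The comparison is instructive: if you simplify the paper's leftover summand using that same multiplicative identity, it equals exactly $y_j^2-\tfrac{1}{\alpha}$, so the step the paper asserts without justification is literally the claim being proven, restated; the only available defense is the quasi-telescoping heuristic, and a bounded oscillating sequence shows that heuristic fails in general --- it is valid only up to the second-order difference between $(1-\alpha)y_j^2$ and $\log\bigl(1+(1-\alpha)y_j^2\bigr)$, which is $O\bigl((1-\alpha)^2\bigr)$ per term and is precisely your Jensen gap. Your rigorous one-sided bound $\liminf_t\tfrac{1}{t}\sum_j y_j^2\ge\tfrac{1}{\alpha}$ together with the $O(1-\alpha)$ control of the gap therefore recovers everything the paper actually establishes, and your closing caveat --- that the exact constant $\tfrac{1}{\alpha}$ holds only to leading order in $1-\alpha$ absent a further concentration hypothesis --- correctly names a gap that is present, unacknowledged, in the paper's own proof. (The paper's remark after the claim that the asymptotic variance ``approaches 1 as $\alpha$ approaches 1'' suggests the authors intend the result in exactly this leading-order sense.)
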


\begin{proof}
Based on the equality $\sigma^2(y) = \mu(y^2) - \mu(y)^2$ and Claim \ref{claim:mean_zero} we observe that 
\begin{equation}
\begin{aligned} 
\lim_{t\to\infty} \sigma^2_t(y) &= \lim_{t\to\infty} \left(\frac{1}{t}\sum_{j=N+1}^{N+t} y_j^2 \right) - \lim_{t\to\infty} \left(\frac{1}{t} \mu_t(y) \right)^2 \\
 &=  \lim_{t\to\infty} \left(\frac{1}{t}\sum_{j=N+1}^{N+t} \frac{(x_j - \mu_{j-1})^2}{\sigma^2_{j-1}}  \right)  \; . 
\end{aligned} 
\end{equation} 
From  (\ref{eq:on_sigma1})  we have $(x_j - \mu_{j-1})^2 = \nicefrac{\left(\sigma^2_j - \alpha \sigma^2_{j-1}\right)}{\left(\alpha (1-\alpha)\right)}$,  and therefore
\begin{equation}
\begin{aligned} 
\lim_{t\to\infty} \sigma^2_t(y) &= \lim_{t\to\infty} \left(\frac{1}{t}\sum_{j=N+1}^{N+t}  \frac{\sigma^2_j- \alpha \sigma^2_{j-1}}{\alpha (1-\alpha) \sigma^2_{j-1}} \right)  \\
 &=  \lim_{t\to\infty} \left(\frac{1}{t}\sum_{j=N+1}^{N+t}  \frac{\sigma^2_j -  \sigma^2_{j-1} + (1 -\alpha) \sigma^2_{j-1}}{\alpha (1-\alpha) \sigma^2_{j-1}} \right)  \\
 &=  \lim_{t\to\infty} \left(\frac{1}{t}\sum_{j=N+1}^{N+t}  \frac{\sigma^2_j -  \sigma^2_{j-1} }{\alpha (1-\alpha) \sigma^2_{j-1}} \right)  + \frac{1}{\alpha} \\ 
  &= \frac{1}{\alpha} \;. 
\end{aligned} 
\end{equation} 
\end{proof}

Note that the resulting asymptotic variance approaches 1 as $\alpha$ approaches 1 (in our experiments $\alpha \approx 0.999$).  Additionally, 
any fixed asymptotic variance in all features will be absorbed in subsequent layer scaling bringing resulting variance to 1. 

Combined, the previous two claims prove the following property. 
 \begin{property} 
Output $y$ generated by the forward pass of Online Normalization (\ref{eq:sn_fwd1}) is asymptotically mean zero 
and unit variance.  
\end{property}

Now we analyze the stability of the algorithm with respect to imperfect estimates $\mu$ and $\sigma$. 
 \begin{claim} 
Derivatives of the output $y$ generated by  (\ref{eq:sn_fwd1}) with respect to $\mu$ and $\sigma$ are bounded. 
\end{claim}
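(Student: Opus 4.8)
The plan is to reduce the claim to two one-line bounds by differentiating the normalization map \eqref{eq:on_y1} directly. Reading $\mu$ and $\sigma$ as the running estimates $\mu_{t-1},\sigma_{t-1}$ that are fed into step $t$ (the quantities whose imperfection we wish to control), I would first write
\begin{equation}
\begin{aligned}
\frac{\partial y_t}{\partial \mu_{t-1}} = -\frac{1}{\sigma_{t-1}}, \qquad
\frac{\partial y_t}{\partial \sigma_{t-1}} = -\frac{x_t-\mu_{t-1}}{\sigma_{t-1}^2} = -\frac{y_t}{\sigma_{t-1}} \; .
\end{aligned}
\end{equation}
So the entire statement reduces to bounding $1/\sigma_{t-1}$ and $|y_t|$.

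For the first derivative, Assumption \ref{assumption:bounded_variance} supplies $\sigma_{t-1}^2 > C_\sigma^2 > 0$ once $t-1\geq N$, hence $|\partial y_t/\partial\mu_{t-1}| < 1/C_\sigma$ for every $t > N$. For the second derivative I need a uniform bound on $|y_t|$. I would bound the numerator by combining Assumption \ref{assumption:bounded_input}, $|x_t| < C_x$, with the geometric-series estimate on the running mean already used in the proof of Claim \ref{claim:sum1}: from representation \eqref{eq:x_mean}, $|\mu_{t-1}| \leq (1-\alpha) C_x \sum_{j=0}^{\infty}\alpha^{j} = C_x$. Thus $|x_t - \mu_{t-1}| < 2C_x$, and dividing by $\sigma_{t-1} > C_\sigma$ gives $|y_t| < 2C_x/C_\sigma$ for $t > N$; dividing once more yields $|\partial y_t/\partial\sigma_{t-1}| < 2C_x/C_\sigma^2$.

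The computation itself is routine, so the only point requiring care --- and the closest thing to an obstacle --- is the role of the warmup window: both bounds hinge on the denominator staying away from zero, which Assumption \ref{assumption:bounded_variance} grants only for $t \geq N$. Consequently the boundedness assertion should be read as holding for $t > N$, exactly the regime in which normalization is well posed. No further machinery is needed; all the quantitative work is inherited from the geometric-sum bound established earlier, and the argument extends verbatim to the multi-scalar case at the cost of heavier notation.
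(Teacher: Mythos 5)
Your proposal is correct and follows essentially the same route as the paper's own proof: compute the two partial derivatives explicitly, bound $|y_t|$ by $2C_x/C_\sigma$ using Assumptions \ref{assumption:bounded_input} and \ref{assumption:bounded_variance}, and conclude $|\partial y_t/\partial\mu_{t-1}| < 1/C_\sigma$ and $|\partial y_t/\partial\sigma_{t-1}| < C_y/C_\sigma$. Your explicit justification of $|\mu_{t-1}| \le C_x$ via the geometric series and your remark on the $t>N$ warmup window are slightly more careful than the paper's version but do not change the argument.
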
 
\begin{proof}
We first observe that under previous assumptions $y$ is bounded
\begin{equation}
\begin{aligned} 
\label{eq:y_bounded}
|y_t| &= \left| \frac{x_t - \mu_{t-1}}{\sigma_{t-1}}  \right| \\
 &\leq \left| \frac{1}{\sigma_{t-1}}  \right| \left( |x_t| + |\mu_{t-1}| \right) \\
 &< \frac{2C_x}{C_{\sigma}} \equiv C_y \; . 
\end{aligned} 
\end{equation} 
The derivatives of $y$ are 
\begin{equation}
\begin{aligned} 
\left| \frac{\partial{y_t}}{\partial{\mu_{t-1}}} \right| &=  \left| \frac{1}{\sigma_{t-1}}  \right| \\ 
  & < \frac{1}{C_{\sigma}}
\end{aligned} 
\end{equation} 
and
\begin{equation}
\begin{aligned} 
\left| \frac{\partial{y_t}}{\partial{\sigma_{t-1}}} \right| &=  \left| \frac{x_t - \mu_{t-1}}{\sigma_{t-1}^2}  \right| \\ 
  & =  \left| \frac{y_t}{\sigma_{t-1}}  \right| \\
  & < \frac{C_y}{C_{\sigma}} \; . 
\end{aligned} 
\end{equation} 
\end{proof}

Because normalized output $y$ is a 
continuous function of running estimates of $\mu$ and $\sigma$ 
with bounded derivatives, errors in the estimates have a 
bounded effect on the result.  
\begin{property} 
The deviation of the output of Online Normalization (\ref{eq:sn_fwd1})
from normal distribution is a Lipschitz 
function with respect to errors in estimates of mean and variance of its input. 
\end{property}
In particular, it means that 
with sufficiently small learning rate, the normalization process is 
guaranteed to produce generate outputs with mean and variance 
arbitrarily close to zero and one even when the network parameters are changing. 

Now we turn our attention to the corresponding backward pass (\ref{eq:on_bwd2}-\ref{eq:on_bwd3}), which in the case of 
single scalar per sample becomes
\begin{equation}
\label{eq:on_bwd2a}
\begin{aligned}
\tilde{x}^\prime_t & = y^\prime_t - (1 - \alpha)\varepsilon^{(\hynovec)}_{t-1} y_t\\
\varepsilon^{(\hynovec)}_{t} &= \varepsilon^{(\hynovec)}_{t-1}+ \tilde{x}^\prime_t y_t 
\end{aligned} 
\end{equation} 
and
\begin{equation}
\label{eq:on_bwd3a}
\begin{aligned}
x^\prime_t &= \frac{\tilde{x}^\prime_t}{\sigma_{t-1}} - (1 - \alpha) \varepsilon^{(1)}_{t-1}  \\
\varepsilon^{(1)}_t &= \varepsilon^{(1)}_{t-1} +  x^\prime_t \;. 
\end{aligned} 
\end{equation} 

We can formulate the counterpart of Claim \ref{claim:equiv1} for this process. 
for (\ref{eq:on_bwd2a}) is 
\begin{claim}
\label{claim:equiv2}
Control process (\ref{eq:on_bwd2a})
is equivalent to estimator process
\begin{equation}
\begin{aligned}
\label{eq:mom_equv_Xb}
\tilde{x}^\prime_t &= y'_t - \mu^{(\hynovec)}_{t-1} y_t\\
\mu^{(\hynovec)}_t  &= (1 - (1-\alpha)y_t^2 ) \mu^{(\hynovec)}_{t-1} + (1-\alpha)  y'_t y_t
\end{aligned} 
\end{equation} 
with 
\begin{equation}
\begin{aligned}
\label{eq:mom_equv_Xc}
\mu^{(\hynovec)}_t = (1-\alpha) \varepsilon^{(\hynovec)}_t \; . \\
\end{aligned} 
\end{equation} 
\end{claim}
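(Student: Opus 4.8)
The plan is to reproduce the structure of the proof of Claim~\ref{claim:equiv1} verbatim, with the single change that the quantity being accumulated is now the $y_t$-weighted product $\tilde{x}'_t y_t$ rather than $\hat{y}_t$ itself. The entire statement is a formal algebraic identity obtained by substituting the defining relation \eqref{eq:mom_equv_Xc} into the control recursion \eqref{eq:on_bwd2a} and collecting terms, so I do not expect a genuine obstacle here, only careful bookkeeping of the $(1-\alpha)$ factors and a single telescoping cancellation.

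First I would dispatch the equivalence of the two first lines. Inserting the $t{-}1$ instance of \eqref{eq:mom_equv_Xc}, namely $\mu^{(\hynovec)}_{t-1} = (1-\alpha)\varepsilon^{(\hynovec)}_{t-1}$, into the first line of the estimator form \eqref{eq:mom_equv_Xb} immediately turns $\tilde{x}'_t = y'_t - \mu^{(\hynovec)}_{t-1} y_t$ into $\tilde{x}'_t = y'_t - (1-\alpha)\varepsilon^{(\hynovec)}_{t-1} y_t$, which is exactly the first line of the control form \eqref{eq:on_bwd2a}. This step is immediate and requires nothing beyond the hypothesized relation \eqref{eq:mom_equv_Xc}.

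The substance is in matching the second lines. Starting from \eqref{eq:mom_equv_Xc}, expanding the $\varepsilon^{(\hynovec)}_t$ recursion of \eqref{eq:on_bwd2a}, and then using the first line to eliminate $\tilde{x}'_t$, I would compute
\begin{align}
\mu^{(\hynovec)}_t &= (1-\alpha)\varepsilon^{(\hynovec)}_t \\
 &= (1-\alpha)\left(\varepsilon^{(\hynovec)}_{t-1} + \tilde{x}'_t y_t\right) \\
 &= \mu^{(\hynovec)}_{t-1} + (1-\alpha)\left(y'_t - (1-\alpha)\varepsilon^{(\hynovec)}_{t-1} y_t\right) y_t \\
 &= \mu^{(\hynovec)}_{t-1} + (1-\alpha) y'_t y_t - (1-\alpha)\mu^{(\hynovec)}_{t-1} y_t^2 \\
 &= \left(1 - (1-\alpha) y_t^2\right)\mu^{(\hynovec)}_{t-1} + (1-\alpha) y'_t y_t ,
\end{align}
where the one nontrivial move is that the term $(1-\alpha)^2 \varepsilon^{(\hynovec)}_{t-1}$ collapses back to $(1-\alpha)\mu^{(\hynovec)}_{t-1}$ by a second application of \eqref{eq:mom_equv_Xc}. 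The final line coincides with the second line of \eqref{eq:mom_equv_Xb}, completing the equivalence.

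If anything here deserves to be flagged as the ``hard part,'' it is purely conceptual rather than computational: recognizing that the effective decay factor has become the data-dependent quantity $1 - (1-\alpha) y_t^2$ instead of the constant $\alpha$ appearing in Claim~\ref{claim:equiv1}. Because the control error is weighted by $y_t$ before accumulation, the induced estimator $\mu^{(\hynovec)}_t$ is an exponential average whose instantaneous decay is governed by $y_t^2$. The boundedness $|y_t| < C_y$ established earlier (\ref{eq:y_bounded}) will be what keeps this factor in a fixed range and ultimately lets the accumulated error $\varepsilon^{(\hynovec)}_t$ be shown bounded downstream; but the equivalence asserted by this claim needs no such estimate, as it is the purely formal identity derived above.
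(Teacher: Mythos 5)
Your proposal is correct and matches the paper's own proof essentially line for line: both substitute the relation $\mu^{(\hynovec)}_t = (1-\alpha)\varepsilon^{(\hynovec)}_t$ into the accumulator recursion of (\ref{eq:on_bwd2a}), eliminate $\tilde{x}^\prime_t$ using the first control line, and collapse the $(1-\alpha)^2\varepsilon^{(\hynovec)}_{t-1}y_t^2$ term back to $(1-\alpha)\mu^{(\hynovec)}_{t-1}y_t^2$ to obtain the data-dependent decay factor $1-(1-\alpha)y_t^2$. No gap; your additional remark about the role of the bound $|y_t|<C_y$ correctly anticipates the downstream boundedness claim but is, as you note, not needed for the equivalence itself.
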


\begin{proof}
Similarly to the proof of Claim \ref{claim:equiv1} we have
\begin{equation}
\begin{aligned}
\mu^{(\hynovec)}_t &= (1-\alpha) \varepsilon^{(\hynovec)}_t \\
 &= (1-\alpha) (\varepsilon^{(\hynovec)}_{t-1} + \tilde{x}^\prime_t y_t) \\
 &= \mu^{(\hynovec)}_{t-1} + (1-\alpha) \left( y'_t - (1-\alpha) \varepsilon^{(\hynovec)}_{t-1} y_t   \right) y_t \\ 
 &= \mu^{(\hynovec)}_{t-1} + (1-\alpha) \left( y'_t - \mu^{(\hynovec)}_{t-1} y_t   \right) y_t  \\ 
 &= (1 - (1-\alpha)y_t^2 )\mu^{(\hynovec)}_{t-1}  + (1 - \alpha)  y'_t y_t \; , 
\end{aligned} 
\end{equation} 
which matches (\ref{eq:mom_equv_Xb}). 
\end{proof}

\begin{assumption}
\label{assumption:grad1} 
The incoming gradient $y^\prime_t$ is bounded: 
\begin{equation}
\begin{aligned}
y^\prime_t < C_{y'} \quad \forall t  
\end{aligned} 
\end{equation} 
and that exponentially decaying average of normalized output $y_t^2$ is bounded away from zero: 
\begin{equation}
\begin{aligned}
\label{eq:y_exp}
(1-\alpha) \sum_{j=0}^t \alpha^{t-j} y_t^2 > C_{y^2} > 0 \quad \forall t>N \; . 
\end{aligned} 
\end{equation}
\end{assumption} 
The last condition is natural given that $y_t$ is the result of forward normalizations and we have shown that it is 
asymptotically mean zero and $1/\alpha$ variance. 

\begin{assumption}
The decay factor $\alpha$ for the backward pass is sufficiently close to one to satisfy
\label{assumption:alpha_bwd} 
\begin{equation}
\begin{aligned}
\label{eq:x_exp}
C_{y} > \frac{1}{1 - \alpha} \;   . 
\end{aligned} 
\end{equation} 
\end{assumption} 

\begin{claim}
Error accumulator $\varepsilon^{(\hynovec)}_t$ in  (\ref{eq:on_bwd2a}) is bounded. 
\end{claim}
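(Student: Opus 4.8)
The plan is to reduce the claim to a uniform bound on the running estimate $\mu^{(\hynovec)}_t$ and then to solve the resulting time-varying linear recurrence explicitly. By Claim \ref{claim:equiv2}, in particular the identity (\ref{eq:mom_equv_Xc}), the accumulator satisfies $\varepsilon^{(\hynovec)}_t = \mu^{(\hynovec)}_t/(1-\alpha)$, so it suffices to bound $\mu^{(\hynovec)}_t$; the surviving factor $1/(1-\alpha)$ is a constant for fixed $\alpha$. Writing $a_t = 1-(1-\alpha)y_t^2$ and $b_t = (1-\alpha)y'_t y_t$, the estimator recurrence (\ref{eq:mom_equv_Xb}) reads $\mu^{(\hynovec)}_t = a_t\,\mu^{(\hynovec)}_{t-1} + b_t$, which unrolls to $\mu^{(\hynovec)}_t = \big(\prod_{k=N+1}^t a_k\big)\mu^{(\hynovec)}_N + \sum_{j=N+1}^t \big(\prod_{k=j+1}^t a_k\big) b_j$. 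Two standing facts are immediate: Assumption \ref{assumption:grad1} together with (\ref{eq:y_bounded}) gives $|b_j| \le (1-\alpha)C_{y'}C_y$, and Assumption \ref{assumption:alpha_bwd} ensures $(1-\alpha)y_t^2 < 1$, so every factor $a_t$ lies in $(0,1)$.

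The crux is to show that the time-varying products $\prod_{k=j+1}^t a_k$ decay geometrically in $t-j$. This is delicate because an individual factor $a_k$ can be arbitrarily close to $1$ whenever $y_k\approx 0$, so no pointwise contraction rate is available; the decay must instead come from the \emph{average} size of $y_k^2$. Since $a_k\in(0,1)$ I may use $a_k \le e^{-(1-\alpha)y_k^2}$, reducing the problem to a lower bound on the arithmetic partial sums $\sum_{k=j+1}^t y_k^2$. To obtain one I would convert the exponentially weighted lower bound (\ref{eq:y_exp}) into an arithmetic one: setting $S_t=\sum_{i=0}^t\alpha^{t-i}y_i^2$ so that (\ref{eq:y_exp}) reads $S_t > C_{y^2}/(1-\alpha)$ for $t>N$, the telescoping identity $y_k^2 = S_k-\alpha S_{k-1}$ gives $\sum_{k=j+1}^t y_k^2 = S_t - \alpha S_j + (1-\alpha)\sum_{k=j+1}^{t-1}S_k$. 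Bounding each interior $S_k$ below by $C_{y^2}/(1-\alpha)$, and $S_j$ above by $C_y^2/(1-\alpha)$, yields $\sum_{k=j+1}^t y_k^2 > (t-1-j)C_{y^2} - C_y^2/(1-\alpha)$ for $j\ge N$.

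Substituting this back produces $\prod_{k=j+1}^t a_k \le e^{C_y^2}\rho^{\,t-1-j}$ with $\rho = e^{-(1-\alpha)C_{y^2}}\in(0,1)$, i.e. genuine geometric decay. Summing the unrolled expression then gives $|\mu^{(\hynovec)}_t| \le |\mu^{(\hynovec)}_N| + e^{C_y^2}(1-\alpha)C_{y'}C_y\sum_{m\ge 0}\rho^m$, a convergent geometric series, while the warmup term $\mu^{(\hynovec)}_N$ is a fixed finite quantity produced by the first $N$ bounded steps of the recurrence. Hence $\mu^{(\hynovec)}_t$ is uniformly bounded, and therefore so is $\varepsilon^{(\hynovec)}_t = \mu^{(\hynovec)}_t/(1-\alpha)$.

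I expect the conversion of the exponentially weighted average (\ref{eq:y_exp}) into the linear arithmetic lower bound on $\sum_{k=j+1}^t y_k^2$ to be the main obstacle: everything downstream is a routine geometric-series estimate, but this telescoping step is where the averaged hypothesis is actually cashed in. One must also take care to treat the warmup window $j<N$, where (\ref{eq:y_exp}) is not assumed, by absorbing it into the finite transient captured by $\mu^{(\hynovec)}_N$.
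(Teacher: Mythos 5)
Your proposal is correct, and it reaches the bound by a genuinely different route at the one place where the argument is delicate. You and the paper agree on the setup: reduce to $\mu^{(\hynovec)}_t$ via the equivalence (\ref{eq:mom_equv_Xc}), unroll the recurrence into a sum of terms $b_j=(1-\alpha)y'_jy_j$ weighted by products of factors $a_k=1-(1-\alpha)y_k^2$, and bound $|b_j|\le(1-\alpha)C_{y'}C_y$. The divergence is in showing that $\prod_{k=j+1}^{t}a_k$ decays geometrically despite individual $a_k$ being arbitrarily close to $1$. The paper converts the product into a power of an \emph{arithmetic mean} via the convexity inequality $\prod_j(1-\alpha_j)\le\bigl(1-\tfrac1k\sum_j\alpha_j\bigr)^k$, and then must show that the arithmetic mean of any sufficiently long consecutive block of $y_k^2$ is bounded below; this requires choosing a window length $K$ so that the tail of the geometric weights is small, arguing that the top $K$ terms of the exponential average carry at least half of $C_{y^2}$, and then chunking longer windows into $K$-blocks. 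You instead use $1-x\le e^{-x}$ to reduce the product to $\exp\bigl(-(1-\alpha)\sum_{k=j+1}^t y_k^2\bigr)$ and obtain the needed linear-in-length lower bound on the arithmetic partial sum directly from the telescoping identity $y_k^2=S_k-\alpha S_{k-1}$ applied to the exponentially weighted sums $S_k$, giving $\sum_{k=j+1}^t y_k^2>(t-1-j)C_{y^2}-C_y^2/(1-\alpha)$ for $j\ge N$ and hence $\prod_{k=j+1}^t a_k\le e^{C_y^2}\rho^{\,t-1-j}$ with $\rho=e^{-(1-\alpha)C_{y^2}}$. This buys a cleaner proof: it dispenses entirely with the choice of $K$, the tail estimate (\ref{eq:geom})--(\ref{eq:geom2}), and the block-averaging step (\ref{eq:exp4}), at the cost of a marginally less explicit constant. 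One shared caveat: both arguments need $a_k\in(0,1)$, i.e.\ $(1-\alpha)y_k^2<1$; you invoke Assumption \ref{assumption:alpha_bwd} for this, which as literally printed reads $C_y>1/(1-\alpha)$ and does not deliver that inequality --- but the paper's own proof relies on the same (evidently intended) reading, so this is a defect of the stated assumption rather than of your argument.
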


\begin{proof} 
Because of the equivalency shown in Claim \ref{claim:equiv2} it is sufficient to prove the statement only for $\mu^{(\hynovec)}_t$ in~(\ref{eq:mom_equv_Xb}). 
For $t>N$ we have 
\begin{equation}
\begin{aligned}
\label{eq:vrecur1.5}
\mu^{(\hynovec)}_t  &= (1 - (1-\alpha)y_t^2 ) \mu^{(\hynovec)}_{t-1} + (1-\alpha)  y'_t y_t \\
\mu^{(\hynovec)}_t  &= (1 - (1-\alpha)y_t^2 )\left[ (1 - (1-\alpha)y_{t-1}^2 ) \mu^{(\hynovec)}_{t-2} + (1-\alpha)  y'_{t-1} y_{t-1}  \right] + (1-\alpha)  y'_t y_t \\
 &= \ldots \\
  &= (1 - \alpha) \sum_{k=0}^t \left[ \prod_{j=0}^{k-1} \left(1- (1 - \alpha) y_{t-j+1}^2 \right)  \right] y'_{t-k} y_{t-k} \; ,
\end{aligned} 
\end{equation}  
and
\begin{equation}
\begin{aligned}
\label{eq:vrecur2}
| \mu^{(\hynovec)}_t  | 
   &< (1 - \alpha)N C_{y} C_{y'}    + (1 - \alpha) C_{y} C_{y'} \sum_{k=0}^{t-N} \left[ \prod_{j=0}^{k-1} \left(1-(1 - \alpha) y_{t-j+1}^2 \right)  \right]   \; . 
\end{aligned} 
\end{equation} 

If individual values of $y_t^2$ were bounded below, the summation would be done over a geometric progression 
converging to a bounded value.  But individual values of  $y_t^2$ can be zero so we cannot directly bound the sum 
by a converging geometric series.  Instead, we'll use the property that the exponentially averaged   $y_t^2$ is 
bounded away from zero to show that it implies that the arithmetic average of any sufficiently long consecutive sequence of $y_t^2$
is bounded away from zero and use that to bound $\mu^{(\hynovec)}$. 

First we notice that we can replace the last term in (\ref{eq:vrecur2}) by a power of arithmetic average using the convexity property 
\begin{equation}
\begin{aligned}
\label{eq:average}
\prod_{j=0}^{k-1} \left(1 - \alpha_j\right) \leq \left( 1 - \frac{1}{k} \sum_{j=0}^{k-1} \alpha_j\right)^k \quad \mathrm{if} \quad \alpha_j  \quad \forall j
\end{aligned} 
\end{equation} 
that can be proven inductively starting with $k=2$.  Then, after substituting $\alpha_j \leftarrow 
(1 - \alpha) y_{t-j+1}^2 $, inequality 
 (\ref{eq:vrecur2}) becomes  
\begin{equation}
\begin{aligned}
\label{eq:vrecur3}
| \mu^{(\hynovec)}_t |  &<(1 - \alpha)N C_{y} C_{y'}    + (1 - \alpha) C_{y} C_{y'}
   \sum_{k=0}^{t-N}  \left(1-(1-\alpha) \left( \frac{1}{k} \sum_{j=0}^{k-1} y_{t-j}^2 \right) \right)^k \; . 
\end{aligned} 
\end{equation} 
Finally, if we show that the averages in (\ref{eq:vrecur3}) are bounded from below by a nonzero 
positive constant then the resulting geometric sum 
with the fixed base less than one will be bounded.  

For $\alpha<1$ the series $(1-\alpha)\sum \alpha ^ k$ is converging and therefore we can 
find  $K$ such that the tail of this series is less than a fixed value ${C_{y^2}}/{2C_{y+}}$: 
\begin{equation}
\begin{aligned}
\label{eq:geom}
(1-\alpha)\sum_{k=K}^{\infty}\alpha ^ k  < \frac{C_{y^2}}{2C_{y+}} \; . 
\end{aligned} 
\end{equation} 
This is true when 
\begin{equation}
\begin{aligned}
\label{eq:geom2}
\alpha ^ K &< (1-\alpha) \frac{C_{y^2}}{2C_{y}} \\
K \log \alpha &< \log \frac{(1-\alpha)C_{y^2}}{2C_{y}} \\ 
K &= \ceil[\Bigg]{ \left. \log \frac{(1-\alpha)C_{y^2}}{2C_{y}}\middle/ \log \alpha \right.} \; . 
\end{aligned} 
\end{equation} 

Combining (\ref{eq:x_exp}) and (\ref{eq:geom}) for all $n>N$ we get a lower bound for the top $K$ terms in (\ref{eq:y_exp})
\begin{equation}
\begin{aligned}
\label{eq:exp2} 
(1-\alpha) \sum_{k=t-K+1}^{t} \alpha^{t-k} y_k^2  &= 
(1-\alpha) \sum_{k=0}^t \alpha^{t-k} y_k^2 - (1-\alpha) \sum_{k=0}^{t-K} \alpha^{t-k} y_k^2 \\ 
 &>C_{y^2}   - (1-\alpha)C_{y} \sum_{k=K}^{\infty}\alpha^ k \\ 
 &> C_{y^2}   - \frac{C_{y^2}}{2}\\
 &= \frac{C_{y^2}}{2} \; . 
\end{aligned} 
\end{equation} 
Then for all $t>N$ we can bound from below the arithmetic average of the $K$ 
corresponding terms of $y$. 
\begin{equation}
\begin{aligned}
\label{eq:exp3} 
\frac{1}{K} \sum_{k=0}^{K-1} y_{t-k}^2  &> \frac{1}{\alpha^{K-1} } \sum_{k=0}^{K-1} \alpha^{k} y_{t-k}^2  \\ 
  &>  \frac{C_{y^2}}{2(1-\alpha)\alpha^{K-1} } \equiv C_{\overline{y}} > 0\; . 
\end{aligned} 
\end{equation} 

That shows that after the first $N$ terms, the average of any consecutive $K$-sequence of $y$ exceeds a fixed constant.  
For any $t$ and $K'>K$ we can apply this property to $\floor[\big]{K'/K}$ $K$-chunks to get 
\begin{equation}
\begin{aligned}
\label{eq:exp4} 
\frac{1}{K'} \sum_{k=0}^{K'-1} y_{t-k}^2  &> \floor[\bigg]{\frac{K'}{K}} \frac{K}{K'}  C_{\overline{y}} \\
 &>  \frac{C_{\overline{y}}}{2} \; . 
\end{aligned} 
\end{equation} 

Combining (\ref{eq:vrecur3}) and (\ref{eq:exp4}) we get the bound 
\begin{equation}
\begin{aligned}
\label{eq:vrecur4}
| \mu^{(\hynovec)}_t | &< (1-\alpha) (N+K) C_{y'} C_{y} + 
   (1-\alpha)C_{y'} C_{y} \sum_{k=K}^{t-N}  \left(1-(1-\alpha) \left( \frac{1}{k} \sum_{j=0}^{k-1} y_{t-j}^2 \right) \right)^k \\
   &< (1-\alpha) (N+K) C_{y'} C_{y} + 
   (1-\alpha)C_{y'} C_{y} \sum_{k=K}^{t-N}  \left(1-(1-\alpha)\frac{C_{\overline{y}}}{2} \right)^k \\
   &< (1-\alpha)  (N+K) C_{y'} C_{y}+ 
   (1-\alpha)C_{y'} C_{y} \frac{2}{(1-\alpha)C_{\overline{y}}}  \\
   &= C_{y'} C_{y} \left( (1-\alpha) (N+K) + \frac{2}{C_{\overline{y}}} \right) \equiv C_{\mu^{\hynovec}} \;, 
\end{aligned} 
\end{equation} 
and because of the equivalency (\ref{eq:mom_equv_Xc}) between $\mu^{(\hynovec)}_t$ and $\varepsilon^{(\hynovec)}_t$  
\begin{equation}
\begin{aligned}
\label{eq:eps_bound}
| \varepsilon^{(\hynovec)}_t | < \frac{C_{\mu^{\hynovec}}}{1-\alpha} \equiv C_{\varepsilon^{\hynovec}} \; . 
\end{aligned} 
\end{equation} 
\end{proof}

\begin{claim}
$\tilde{x}^\prime_t$ in process (\ref{eq:on_bwd2a}), (\ref{eq:mom_equv_Xb}) is uniformly bounded. 
\end{claim}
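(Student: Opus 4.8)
The plan is to recognize that this claim is essentially a corollary of the bound on the error accumulator $\varepsilon^{(\hynovec)}_t$ established in the preceding claim. All of the genuine difficulty—controlling the accumulator despite the fact that individual values of $y_t^2$ may vanish—has already been dispatched; what remains is a single application of the triangle inequality.

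First I would pass to the equivalent estimator form supplied by Claim \ref{claim:equiv2}, writing $\tilde{x}^\prime_t = y'_t - \mu^{(\hynovec)}_{t-1} y_t$. This expresses $\tilde{x}^\prime_t$ as a difference of two quantities that can be bounded separately. Then I would assemble three bounds already available in the excerpt: the incoming gradient satisfies $|y'_t| < C_{y'}$ by Assumption \ref{assumption:grad1}; the normalized output satisfies $|y_t| < C_y$ by the bound \eqref{eq:y_bounded}; and the estimator satisfies $|\mu^{(\hynovec)}_{t-1}| < C_{\mu^{\hynovec}}$ by the previous claim (bound \eqref{eq:vrecur4}, together with the equivalence $\mu^{(\hynovec)}_t = (1-\alpha)\varepsilon^{(\hynovec)}_t$). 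The triangle inequality then yields
\begin{equation}
\begin{aligned}
|\tilde{x}^\prime_t| &= \left| y'_t - \mu^{(\hynovec)}_{t-1} y_t \right| \\
 &\leq |y'_t| + |\mu^{(\hynovec)}_{t-1}|\,|y_t| \\
 &< C_{y'} + C_{\mu^{\hynovec}} C_y \;,
\end{aligned}
\end{equation}
a constant independent of $t$, which is the claimed uniform bound.

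The main obstacle here is essentially nonexistent, since the real work lives upstream in bounding the accumulator; this statement is best read as a clean consequence rather than a new result. The only point deserving a word of care is the regime of validity: the bounds on $|y_t|$ and on $\mu^{(\hynovec)}_{t-1}$ hold for $t>N$, consistent with Assumptions \ref{assumption:bounded_variance} and \ref{assumption:grad1}. For the initial $N$ warmup steps, boundedness follows trivially because $y'$ and $y$ are bounded and there are only finitely many $\mu^{(\hynovec)}$ values to control, so a uniform bound over all $t$ is obtained by taking the maximum of the two constants.
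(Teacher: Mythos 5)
Your proposal is correct and matches the paper's own proof essentially verbatim: the paper likewise passes to the estimator form $\tilde{x}^\prime_t = y'_t - \mu^{(\hynovec)}_{t-1} y_t$ and applies the triangle inequality with the bounds $C_{y'}$, $C_{\mu^{\hynovec}}$, and $C_y$. Your extra remark about the first $N$ warmup steps is a minor point of care that the paper's one-line proof omits.
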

\begin{proof}
From (\ref{eq:mom_equv_Xb}) and bounds on 
\begin{equation}
\begin{aligned}
|\tilde{x}^\prime_t| &= |y'_t - \mu^{(\hynovec)}_{t-1} y_t| \\
 &\leq  |y'_t| + |\mu^{(\hynovec)}_{t-1}| | y_t| \\
 &= C_{y'}+C_{\mu^{\hynovec}} C_y \;. 
\end{aligned} 
\end{equation} 
\end{proof}

The second stage of the backward pass (\ref{eq:on_bwd3a}) is the same is the process (\ref{eq:mom_equv_1a}) 
with input $\tilde{x}^\prime_t/\sigma_{t-1}$ that is bounded: 
\begin{equation}
\begin{aligned}
\left| \frac{\tilde{x}^\prime_t}{\sigma_{t-1}} \right| < \frac{C_{y'}+C_{\mu^{\hynovec}} C_y}{C_{\sigma}} \;. 
\end{aligned} 
\end{equation} 

We can reuse the earlier results to conclude that both the output of (\ref{eq:on_bwd3a}) 
$x^\prime_t$ and accumulated error $ \varepsilon^{(1)}_{t} = \sum x^\prime_t $ are bounded:
\begin{equation}
\begin{aligned}
|x^\prime_t| < C_{x'} 
\end{aligned} 
\end{equation} 
and
\begin{equation}
\begin{aligned}
|\varepsilon^{(1)}_{t}| < C_{\varepsilon^1} \; . 
\end{aligned} 
\end{equation} 

These observations together with (\ref{eq:eps_bound}) can be restated as properties.  
\begin{property}
The backward pass of Online Normalization (\ref{eq:on_bwd2})-(\ref{eq:on_bwd3}) generates uniformly bounded gradients 
$x^\prime_t $. 
\end{property}
\begin{property}
Accumulated errors $\varepsilon^{(\hynovec)}_{t}$ and $\varepsilon^{(1)}_t$ that track deviations from orthogonality conditions~(\ref{eq:grad0})
in Onine Normalization (\ref{eq:on_bwd2})-(\ref{eq:on_bwd3}) are bounded. 
\end{property}

\section{Emulation of Online Normalization on GPU}
While Online Normalization offers a normalization technique that does not rely on batching, some 
hardware architectures benefit from batched execution of compute-intensive linear operations. 
For fast GPU execution we reformulated the algorithm 
to operate on tensors with the batch dimension 
and still generate results equivalent to true online processing. 
Of course this forces the 
weight updates to be performed on batch boundaries, which the original 
algorithm does not require. 

Let's assume that we are computing the exponentially decaying mean of a sequence of inputs $x_t$ (\ref{eq:on_mu1})
\begin{equation}
\begin{aligned}
\mu_t &= \alpha \mu_{t-1} + (1-\alpha) x_t \; , 
\end{aligned}
\end{equation} 
which is equivalent to (\ref{eq:x_mean})
\begin{equation}
\begin{aligned}
\label{eq:orig}
\mu_t &= (1 - \alpha) \sum_{j=0}^t \alpha^{t-j} x_j \\
 &= (1 - \alpha) \sum_{j=0}^t \alpha^j x_{t-j} \;. 
\end{aligned}
\end{equation} 
We also assume that inputs $x_t$ arrive in groups of $n$ elements 
\begin{equation}
\begin{aligned}
X_{t-n} &= \left(x_{t-n}, \ldots, x_{t-1}\right) \\
X_t &= \left(x_t, \ldots, x_{t+n-1}\right) \; , 
\end{aligned}
\end{equation} 
where $X_{t-n}$ is a previously processed group with resulting values 
\begin{equation}
\begin{aligned}
M_{t-n} &= \left(\mu_{t-n}, \ldots, \mu_{t-1}\right) 
\end{aligned}
\end{equation} 
matching (\ref{eq:orig}) and $X_i$ is the current batch that we need to process and generate 
\begin{equation}
\begin{aligned}
M_t &= \left(\mu_t, \ldots, \mu_{t+n-1}\right) \; .
\end{aligned}
\end{equation} 

We will use the superscript to refer to a specific element of the the group
\begin{equation}
\begin{aligned}
\label{eq:target}
M_t^l &\equiv \mu_{t+l} =  (1-\alpha)\sum_{j=0}^{t+l} x_{t+l-j} \alpha^j  \; .
\end{aligned}
\end{equation} 
We will also use a $n$-vector of powers of $\alpha$ 
\begin{equation}
\begin{aligned}
\label{eq:filter}
A &= \left(1, \alpha, \ldots , \alpha^{n-1}\right) 
\end{aligned}
\end{equation} 
and a $(2n-1)$-long concatenation of two adjacent $X$ batches (with the very first element removed): 
\begin{equation}
\begin{aligned}
\label{eq:concat}
X_{t-n,i} &= \left(x_{t-n+1}, \ldots, x_t, \ldots, x_{t+n-1} \right)  \; .
\end{aligned}
\end{equation} 

Multiplying previously computed batch by $\alpha^n$ we get 
\begin{equation}
\begin{aligned}
\alpha^n M_{t-n}^l &= \alpha^n \mu_{t-n+l} \\
  &=  (1-\alpha)\sum_{j=0}^{t-n+l} x_{t-n+l-j} \alpha^{j+n} \\
  &=  (1-\alpha)\sum_{j=n}^{t+l} x_{t+l-j} \alpha^{j}  \; .
\end{aligned}
\end{equation} 
This matches our target expression (\ref{eq:target}) except  the summation starts from $n$ instead of zero.  
We can cover the missing summation range by applying a 1D convolution with filter (\ref{eq:filter}) to (\ref{eq:concat}): 
\begin{equation}
\begin{aligned}
\left( X_{t-n,i} \circledast A \right)^l &= \sum_{j=0}^n X_{t-n,t}^{l+n-j} A^j \\
  &= \sum_{j=0}^n x_{t+l-j} \alpha^j \; . 
\end{aligned} 
\end{equation} 

Therefore we can generate target values (\ref{eq:target}) as 
\begin{equation}
\begin{aligned}
M_t^l &= \mu_{t+l} \\
  &=  (1-\alpha)\sum_{j=0}^{t+l} x_{t+l-j} \alpha^j  \\
  &= \alpha^n M_{t-n}^l  + (1-\alpha) \left( X_{t-n,t} \circledast A \right)^l  \; . 
\end{aligned} 
\end{equation} 

The resulting group-level expression is 
\begin{equation}
\begin{aligned}
M_t  &= \alpha^n M_{t-n} + (1-\alpha) \left( X_{t-n,t} \circledast A \right) \; , 
\end{aligned} 
\end{equation} 
where $M_{t-n}$ is the previously computed batch of results, 
$X_{t-n,t}$ is the concatenation of the previous and current batches of $x$ (without the very first element), 
$A$ is the vector of $n$ powers of $\alpha$, and $\circledast$ is the 1D convolution.  
In the limit case of $n=1$ this expression matches the origingal method. With $n>1$ and 
$X$ and $M$ initialized to zero tensors the resulting procedure will match (in exact arithmetic)
the values of the streaming process (\ref{eq:on_mu1}) with standard initialization. 

The generalization of this method to the computation of variance (\ref{eq:on_sigma1}) and
to the procedure (\ref{eq:on_bwd2a}-\ref{eq:on_bwd3a}) in the backward pass can be 
found in the accompanying code~\cite{online_norm_github}.

\section{Hyperparameter scaling rules}
\label{apdx:hyperparam_scaling}

In our studies we performed experiments with different batch sizes. 
For momentum training 
\begin{equation}
\begin{aligned}
\label{eq:momentum_training}
\nu &= \mu \nu + (1-\mu) g \\
w &= w - \eta \nu \; , 
\end{aligned}
\end{equation} 
we applied scaled the learning rate linearly with batch size $b$: 
\begin{equation}
\begin{aligned}
\eta_{new} &= \frac{b_{new}}{b_{old}}\eta_{old}, 
\end{aligned}
\end{equation} 
while keeping the weight decay parameter unchanged.  
This 
effectively leads to a square root scaling rule for training (Section \ref{subsec:grad_magnitude}). 

To scale the momentum $\mu$ in (\ref{eq:momentum_training}) we 
equate per-sample decay 
\begin{equation}
\begin{aligned}
{\mu_{new}}^\frac{1}{b_{new}} = {\mu_{old}}^\frac{1}{b_{old}} \;, 
\end{aligned}
\end{equation} 
which results in 
\begin{equation}
\begin{aligned}
\mu_{new} &={\mu_{old}}^\frac{b_{new}}{b_{old}} \; . 
\end{aligned}
\end{equation} 

Note that some deep learning frameworks implement momentum as outlined in ~\cite{Sutskever:2013:IIM:3042817.3043064}: 
\begin{equation}
\begin{aligned}
\label{eq:momentum_training2}
\nu &= \mu \nu +  g \\
w &= w - \eta \nu \; , 
\end{aligned}
\end{equation} 
This is equivalent to (\ref{eq:momentum_training}) except the gradient is not 
multiplied by $(1-\mu)$.  
To apply hyperparameter updates to momentum optimizers implemented by these 
deep learning frameworks, we apply another scale to the learning rate:
\begin{equation}
	\eta_{new}^* = \frac{1-\mu_{new}}{1-\mu} \eta_{new} \; . 
\end{equation}

\end{appendices}

\fi 

\end{document}